
\documentclass{article}

\usepackage{microtype}
\usepackage{graphicx}
\usepackage{subfigure}
\usepackage{booktabs} 
\usepackage[inline]{enumitem}

\usepackage{hyperref}


\usepackage[accepted]{icml2021}




\usepackage{amsmath,amsfonts,bm,mathtools}









\def\eqref#1{Eq.~\ref{#1}}






\def\Algref#1{Algorithm~\ref{#1}}



\def\1{\bm{1}}








\def\vmu{{\bm{\mu}}}

\def\vb{{\bm{b}}}


\def\mA{{\bm{A}}}
\def\mB{{\bm{B}}}
\def\mC{{\bm{C}}}
\def\mD{{\bm{D}}}

\def\mH{{\bm{H}}}
\def\mI{{\bm{I}}}

\def\mK{{\bm{K}}}

\def\mM{{\bm{M}}}

\def\mP{{\bm{P}}}
\def\mQ{{\bm{Q}}}

\def\mU{{\bm{U}}}

\def\mSigma{{\bm{\Sigma}}}

\DeclareMathAlphabet{\mathsfit}{\encodingdefault}{\sfdefault}{m}{sl}
\SetMathAlphabet{\mathsfit}{bold}{\encodingdefault}{\sfdefault}{bx}{n}














\usepackage{amsthm}

\DeclarePairedDelimiter\abs{\lvert}{\rvert}
\DeclarePairedDelimiter{\norm}{\lVert}{\rVert} 
\makeatletter
\let\oldabs\abs
\def\abs{\@ifstar{\oldabs}{\oldabs*}}

\let\oldnorm\norm
\def\norm{\@ifstar{\oldnorm}{\oldnorm*}}
\makeatother

\newcommand{\normdual}[1]{\norm{#1}_{\Lambda_\alpha^*}}
\newcommand{\Da}{D_{\alpha}(\cdot,\cdot)}

\newtheorem{theorem}{Theorem}
\newtheorem{corollary}{Corollary}[theorem]
\newtheorem*{remark}{Remark}

\newenvironment{customlem}[1]
  {\innercustomlem}
  {\endinnercustomlem}

\newenvironment{customthm}[1]
  {\innercustomthm}
  {\endinnercustomthm}

\newenvironment{customcor}[1]
  {\innercustomcor}
  {\endinnercustomcor}

\usepackage{amssymb}
\usepackage{multicol}
\usepackage{bbm}
\usepackage{adjustbox}
\usepackage{multibib}
\newcites{Supp}{Supplement References}

\newtheorem{definition}{Definition}
\newtheorem{lemma}{Lemma}

\icmltitlerunning{Diffusion Earth Mover's Distance and Distribution Embedding}

\begin{document}
\begin{NoHyper}
\twocolumn[
\icmltitle{Diffusion Earth Mover's Distance and Distribution Embeddings}




\icmlsetsymbol{equal}{*}
\icmlsetsymbol{equalsup}{$\dagger$}

\begin{icmlauthorlist}
\icmlauthor{Alexander Tong}{equal,cs}
\icmlauthor{Guillaume Huguet}{equal,dms,mila}
\icmlauthor{Amine Natik}{equal,dms,mila}
\icmlauthor{Kincaid MacDonald}{am}
\icmlauthor{Manik Kuchroo}{gene,cs}
\icmlauthor{Ronald R. Coifman}{am}
\icmlauthor{Guy Wolf}{equalsup,dms,mila}
\icmlauthor{Smita Krishnaswamy}{equalsup,gene,cs}
\end{icmlauthorlist}

\icmlaffiliation{cs}{Dept. of Comp. Sci., Yale University, New Haven, CT, USA;}
\icmlaffiliation{am}{Dept. of Math., Yale University, New Haven, CT, USA;}
\icmlaffiliation{dms}{Dept. of Math. \& Stat., Universit\'{e} de Montr\'{e}al, Montr\'{e}al, QC, Canada;}
\icmlaffiliation{mila}{Mila -- Quebec AI Institute, Montr\'{e}al, QC, Canada;}

\icmlaffiliation{gene}{Department of Genetics, Yale University, New Haven, CT, USA}

\icmlcorrespondingauthor{Smita Krishnaswamy}{smita.krishnaswamy@yale.edu}

\icmlkeywords{Machine Learning, ICML}

\vskip 0.3in
]


\newcommand{\icmlEqualSupContribution}{\textsuperscript{$\dagger$}Equal senior-author contribution.}

\printAffiliationsAndNotice{\icmlEqualContribution; \icmlEqualSupContribution} 

\end{NoHyper}

\begin{abstract}

We propose a new fast method of measuring distances between large numbers of related high dimensional datasets called the Diffusion Earth Mover's Distance (EMD). We model the datasets as distributions supported on common data graph that is derived from the affinity matrix computed on the combined data. In such cases where the graph is a discretization of an underlying Riemannian closed manifold, we prove that Diffusion EMD is topologically equivalent to the standard EMD with a geodesic ground distance. Diffusion EMD can be computed in $\tilde{O}(n)$ time and is more accurate than similarly fast algorithms such as tree-based EMDs. We also show Diffusion EMD is fully differentiable, making it amenable to future uses in gradient-descent frameworks such as deep neural networks. Finally, we demonstrate an application of Diffusion EMD to single cell data collected from 210 COVID-19 patient samples at Yale New Haven Hospital. Here, Diffusion EMD can derive distances between patients on the manifold of cells at least two orders of magnitude faster than equally accurate methods. This distance matrix between patients can be embedded into a higher level patient manifold which uncovers structure and heterogeneity in patients. More generally, Diffusion EMD is applicable to all datasets that are massively collected in parallel in many medical and biological systems.


\end{abstract}

\section{Introduction}

With the profusion of modern high dimensional, high throughput data, the next challenge is the integration and analysis of collections of related datasets. Examples of this are particularly prevalent in single cell measurement modalities where data (such as mass cytometry, or single cell RNA sequencing data) can be collected in a multitude of patients, or in thousands of perturbation conditions~\cite{Shifrut2018}. These situations motivate the organization and embedding of datasets, similar to how we now organize data points into low dimensional embeddings, e.g., with PHATE~\citep{moon_visualizing_2019}, tSNE~\citep{van_der_maaten_visualizing_2008}, or diffusion maps~\citep{coifman_diffusion_2006}). The advantage of such organization is that we can use the datasets as rich high dimensional features to characterize and group the patients or perturbations themselves. In order to extend embedding techniques to entire datasets, we have to define a distance between datasets, which for our purposes are essentially high dimensional point clouds. For this we propose a new form of Earth Mover's Distance (EMD), which we call {\em Diffusion EMD}\footnote{Python implementation is available at \url{https://github.com/KrishnaswamyLab/DiffusionEMD}.}, where we model the datasets as distributions supported on a common data affinity graph.  We provide two extremely fast methods for computing Diffusion EMD based on an approximate multiscale kernel density estimation on a graph. 

Optimal transport is uniquely suited to the formulation of distances between entire datasets (each of which is a collection of data points) as it generalizes the notion of the shortest path between two points to the shortest set of paths between distributions. Recent works have applied optimal transport in the single-cell domain to interpolate lineages~\cite{schiebinger_optimal-transport_2019, yang_scalable_2019, tong_trajectorynet_2020}, interpolate patient states~\cite{tong_interpolating_2020}, integrate multiple domains~\cite{demetci_gromov-wasserstein_2020}, or similar to this work build a manifold of perturbations~\cite{chen_uncovering_2020}. All of these approaches use the standard \textit{primal} formulation of the Wasserstein distance. Using either entropic regularization approximation and the Sinkhorn algorithm~\cite{cuturi_sinkhorn_2013} to solve the discrete distribution case or a neural network based approach in the continuous formulation~\cite{arjovsky_wasserstein_2017}. We will instead use the \textit{dual} formulation through the well-known Kantorovich-Rubinstein dual to efficiently compute optimal transport between many distributions lying on a common low-dimensional manifold in a high-dimensional measurement space. This presents both theoretical and computational challenges, which are the focus of this work.

Specifically, we will first describe a new Diffusion EMD that is an $L^1$ distance between density estimates computed using multiple scales of diffusion kernels over a graph. Using theory on the H\"older-Lipschitz dual norm on continuous manifolds~\cite{leeb_holderlipschitz_2016}, we show that as the number of samples increases, Diffusion EMD is equivalent to the Wasserstein distance on the manifold. This formulation reduces the computational complexity of computing K-nearest Wasserstein-neighbors between $m$ distributions over $n$ points from $O(m^2 n^3)$ for the exact computation to $\tilde{O}(m n)$ with reasonable assumptions on the data. Finally, we will show how this can be applied to embed large sets of distributions that arise from a common graph, for instance single cell datasets collected on large patient cohorts.

Our contributions include:
\begin{enumerate*}
    \item A new method for computing EMD for distributions over graphs called Diffusion EMD. 
    \item Theoretical analysis of the relationship between Diffusion EMD and the snowflake of a standard EMD. 
    \item Fast algorithms for approximating Diffusion EMD.
    \item Demonstration of the differentiability of this framework. 
    \item Application of Diffusion EMD to embedding massive multi-sample biomedical datasets.
\end{enumerate*}


\section{Preliminaries}
\label{sec:Preliminaries}
We now briefly review optimal transport definitions and classic results from diffusion geometry.

\paragraph{Notation.} We say that two elements $A$ and $B$ are equivalent if there exist $c,C>0$ such that $c A\le B \le C A$, and we denote $A\simeq B$. The definition of $A$ and $B$ will be clear depending on the context.

\paragraph{Optimal Transport.} Let $\mu, \nu$ be two probability distributions on a measurable space $\Omega$ with metric $d(\cdot,\cdot)$, $\Pi(\mu, \nu)$ be the set of joint probability distributions $\pi$ on the space $\Omega \times \Omega$, where for any subset $\omega \subset \Omega$, $\pi(\omega \times \Omega) = \mu(\omega)$ and $\pi(\Omega \times \omega) = \nu(\omega)$. The 1-Wasserstein distance $W_d$ also known as the earth mover's distance (EMD) is defined as:
\begin{equation}\label{eq:primal}
    W_d(\mu, \nu) := \inf_{\pi \in \Pi(\mu, \nu)} \int_{\Omega \times \Omega} d(x, y) \pi(dx, dy).
\end{equation}
When $\mu, \nu$ are discrete distributions with $n$ points, then \eqref{eq:primal} is computable in $O(n^3)$ with a network-flow based algorithm~\cite{peyre_computational_2019}.

Let $\| \cdot \|_{L_d}$ denote the Lipschitz norm w.r.t.\ $d$, then the dual of \eqref{eq:primal} is:  
\begin{equation}\label{eq:dual}
    W_d(\mu, \nu) = \sup_{\| f \|_{L_d} \le 1} \int_\Omega f(x) \mu(dx) - \int_\Omega f(y) \nu(dy). 
\end{equation}
This formulation is known as the Kantorovich dual with $f$ as the witness function. Since it is in general difficult to optimize over the entire space of 1-Lipschitz functions, many works optimize the cost over a modified family of functions such as functions parameterized by clipped neural networks~\cite{arjovsky_wasserstein_2017}, functions defined over trees~\cite{le_tree-sliced_2019}, or functions defined over Haar wavelet bases~\cite{gavish_multiscale_2010}.


\paragraph{Data Diffusion Geometry}
Let $(\mathcal{M}, d_\mathcal{M})$ be a connected Riemannian manifold, we can assign to $\mathcal{M}$ sigma-algebras and look at $\mathcal{M}$ as a ``metric measure space''.  We denote by $\Delta$ the Laplace-Beltrami operator on $\mathcal{M}$. For all $x,y \in \mathcal{M}$ let $h_{t}(x, y)$ be the heat kernel, which is the minimal solution of the heat equation:
\begin{equation} \label{eq: heat equation}
\left(\frac{\partial}{\partial t} - \Delta_x \right)h_t = 0,
\end{equation}
with initial condition $\lim_{t \to 0} h_t(x,y)= \delta_y(x)$, where $x \mapsto \delta_y(x)$ is the Dirac function centered at $y$, and $\Delta_x$ is taken with respect to the $x$ argument of $h_t$. Note that as shown in \citet{grigoryan_heat_2014}, the heat kernel captures the local intrinsic geometry of $\mathcal{M}$ in the sense that as $t \to 0$, $\log h_t(x,y) \simeq -d_{\mathcal{M}}^2(x, y)/4t.$ 

Here, in Sec.~\ref{sec:methods:EMD} (Theorem~\ref{th: cor D_a = geo}) we discuss another topological equivalent of the geodesic distance with a diffusion distance derived from the heat operator $\mH_t := e^{-t \Delta}$ that characterizes the solutions of the heat equation (\eqref{eq: heat equation}), and is related to the heat kernel via $\mH_t f = \int h_t(\cdot,y) f(y) dy$~\citep[see ][for further details]{lafferty2005diffusion,coifman_diffusion_2006,grigoryan_heat_2014}. 

It is often useful (particularly in high dimensional data) to consider data as sampled from a lower dimensional manifold embedding in the ambient dimension. This manifold can be characterized by its local structure and in particular, how heat propagates along it. \citet{coifman_diffusion_2006} showed how to build such a propagation structure over discrete data by first building a graph with affinities
\begin{equation} \label{eq: gaussian_kernel}
    (\mK_\epsilon)_{ij} := e^{-\|x_i - x_j\|_2^2 / \epsilon}
\end{equation}
then considering the density normalized operator $\mM_{\epsilon} := \mQ^{-1} \mK_\epsilon \mQ^{-1}$, where $\mQ_{ii} := \sum_j (\mK_{\epsilon})_{ij}$. Lastly, a Markov diffusion operator is defined by
\begin{equation}
\label{eq :P_heat_approx}
    \mP_\epsilon := \mD^{-1}\mM_{\epsilon}, \text{ where } \mD_{ii}:=\sum_j (\mM_{\epsilon})_{ij}.
\end{equation}
Both $\mD$ and $\mQ$ are diagonal matrices. By the law of large numbers, the operator $\mP_\epsilon$ admits a natural continuous equivalent $\tilde{\mP}_\epsilon$, i.e., for $n$ i.i.d.\ points, the sums modulo $n$ converge to the integrals. Moreover, in \citet[Prop. 3]{coifman_diffusion_2006} it is shown that $\lim_{\epsilon\to0}\tilde{\mP}_\epsilon^{t/\epsilon} = e^{-t\Delta} = \mH_t$. In conclusion, the operator $\mP_\epsilon$ converges to $\tilde{\mP}_\epsilon$ as the sample size increases and $\tilde{\mP}_\epsilon$ provide an approximation of the Heat kernel on the manifold. Henceforth, we drop the subscript of $\mP_\epsilon$ to lighten the notation, further we will use the notation $\mP_\epsilon$ for the operator as well as for the matrix (it will be clear in the context).

\section{EMD through the $L^1$ metric between multiscale density estimates}

The method of efficiently approximating EMD that we will consider here is the approximation of EMD through density estimates at multiple scales. Previous work has considered densities using multiscale histograms over images~\citep{indyk_fast_2003}, wavelets over images and trees~\cite{shirdhonkar_approximate_2008, gavish_multiscale_2010} and densities over hierarchical clusters~\cite{le_tree-sliced_2019}. Diffusion EMD also uses a hierarchical set of bins at multiple scales but with smooth bins determined by the heat kernel, which allows us to show equivalence to EMD with a ground distance of the manifold geodesic. These methods are part of a family that we call {\em multiscale earth mover's distances} that first compute a set of multiscale density estimates or histograms where $L^1$ differences between density estimates realizes an effective {\em witness function} and have (varying) equivalence to the Wasserstein distance between the distributions. This class of multiscale EMDs are particularly useful in computing embeddings of distributions where the Wasserstein distance is the ground distance, as they are amenable to fast nearest neighbor queries. We explore this application further in Sec.~\ref{sec:methods:embeddings} and these related methods in Sec.~\ref{sec:supp:multiscale} of the Appendix.

\begin{figure*}[ht]
\begin{center}
\centerline{\includegraphics[width=\linewidth]{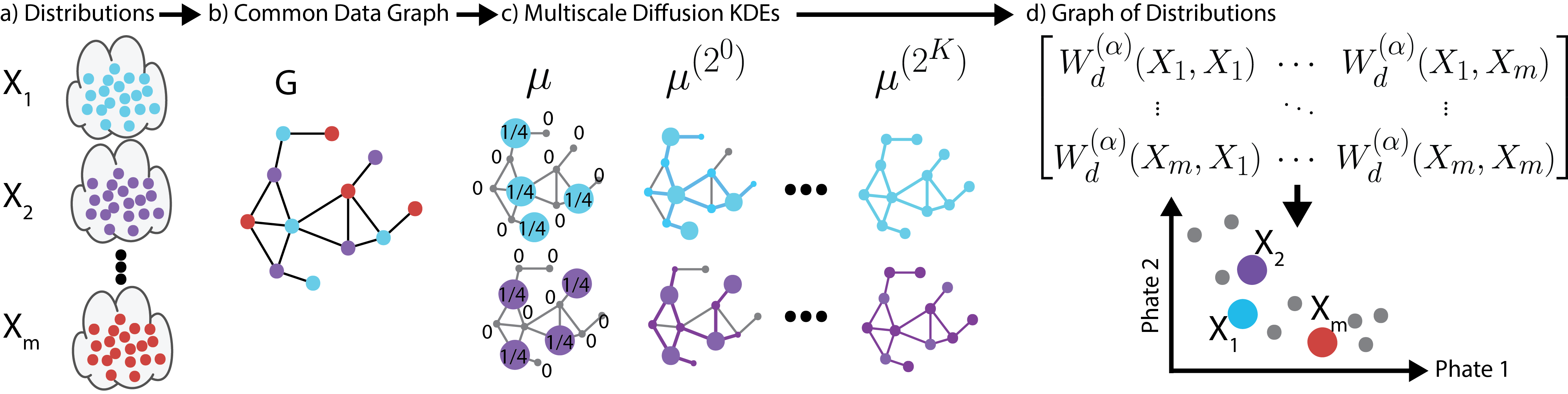}}
\vskip -0.1in
\caption{Diffusion EMD first embeds datasets into a common data graph $G$, then takes multiscale diffusion KDEs for each of the datasets. These multiscale KDEs are then used to compute the Diffusion Earth Mover's Distance between the datasets that can be used in turn to create graphs and embeddings (PHATE \cite{moon_manifold_2018} shown here) of the datasets. }
\label{fig:schematic}
\end{center}
\vskip -0.2in
\end{figure*}

\section{Diffusion Earth Mover's Distance}
We now present the Diffusion EMD, a new Earth Mover's distance based on multiscale diffusion kernels as depicted in Fig.~\ref{fig:schematic}. We first show how to model multiple datasets as distributions on a common data graph and perform multiscale density estimates on this graph.

\subsection{Data Graphs and Density Estimates on Graphs}
\label{sec:datagraphs}

Let $\mathcal{X}=\{X_1, X_2, \ldots, X_m\}$, $\cup_{j=1}^m X_j \subseteq \mathcal{M} \subset \mathbb{R}^d$, be a collection of datasets with $n_i = |X_i|$ and $n = \sum_i n_i$. Assume that the $X_i$'s are independently sampled from a common underlying manifold $(\mathcal{M}, d_{\mathcal{M}})$ which is a Riemannian closed manifold (compact and without boundary) immersed in a (high dimensional) ambient space $\mathbb{R}^d$, with geodesic distance $d_{\mathcal{M}}$. Further, assume that while the underlying manifold is common, each dataset is sampled from a different distribution over it, as discussed below. Such collections of datasets arise from several related samples of data, for instance single cell data collected on a cohort of patients with a similar condition.

Here, we consider the datasets in $\mathcal{X}$ as representing distributions over the common data manifold, which we represent in the finite setting as a common data graph $G_\mathcal{X}=(V,E,w)$ with $V = \cup_{j=1}^m X_j$ and edge weights determined by the Gaussian kernel (see \eqref{eq: gaussian_kernel}), where we identify edge existence with nonzero weights. Then, we associate each $X_i$ with a density measure $\vmu_i^{(t)} : V \to [0,1]$, over the entire data graph. To compute such measures, we first create indicator vectors for the individual datasets on it, let $\bm{1}_{X_i} \in \{0,1\}^n$ be a vector where for each $v \in V, \bm{1}_{X_i}(v) = 1$ if and only if $v \in X_i$. We then derive a kernel density estimate by applying the diffusion operator constructed via \eqref{eq :P_heat_approx} over the graph $G$ to these indicator functions to get scale-dependent estimators
\begin{equation} \label{eq: diffusion_measures}
    \vmu_i^{(t)} := \frac{1}{n_i} \mP^t \bm{1}_{X_i},
\end{equation}
where the scale $t$ is the diffusion time, which can be considered as a meta-parameter \citep[e.g., as used in][]{burkhardt_quantifying_2020} but can also be leveraged in multiscale estimation of distances between distributions as discussed here. Indeed, as shown in \citet{burkhardt_quantifying_2020}, at an appropriately tuned single scale, this density construction yields a discrete version of kernel density estimation.

\subsection{Diffusion Earth Mover's Distance Formulation}
\label{sec:methods:EMD}

We define the Diffusion Earth Mover's Distance between two datasets $X_i, X_j \in \mathcal{X}$ as
\begin{align}\label{eq:diffusion_emd_sum}
    W_{\alpha, K}(X_i, X_j) &:= \sum_{k=0}^{K} \| T_{\alpha, k}(X_i) - T_{\alpha, k}(X_j) \|_1
\end{align}
where $0 < \alpha < 1/2$ is a meta-parameter used to balance long- and short-range distances, which in practice is set close to 1/2, $K$ is the maximum scale considered here, and
\begin{align}\label{eq:diffusion_emd}
    T_{\alpha, k}(X_i) &:= \begin{cases} 2^{-(K-k-1) \alpha} (\vmu_i^{(2^{k+1})} - \vmu_i^{(2^{k})}) & k < K  \\ 
 \vmu_i^{(2^K)} &  k=K 
\end{cases}
\end{align}
Further, to set $K$, we note that if the Markov process governed by $\mP$ converges (i.e., to its stationary steady state) in polynomial time w.r.t.\ $|V|$, then one can ensure that beyond $K = O(\log |V|)$, all density estimates would be essentially indistinguishable as shown by the following lemma, whose proof appears in the Appendix:
\begin{lemma}
There exists a $K = O(\log |V|)$ such that $\vmu_i^{(2^K)} \simeq \mP^{2^K} \bm{1}_{X_i} \simeq \phi_0$ for every $i = 1,\ldots,n$, where $\phi_0$ is the trivial eigenvector of $\mP$ associated with the eigenvalue $\lambda_0 = 1$.
\end{lemma}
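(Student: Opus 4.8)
The plan is to diagonalize the diffusion operator and read the statement off the geometric decay of its non-trivial spectrum. First I would recall that $\mP=\mD^{-1}\mM_\epsilon$ is conjugate to the symmetric matrix $\mD^{1/2}\mP\mD^{-1/2}=\mD^{-1/2}\mM_\epsilon\mD^{-1/2}$, so $\mP$ is diagonalizable with real eigenvalues, and since the Gaussian kernel \eqref{eq: gaussian_kernel} is entrywise positive the matrix $\mP$ is primitive; hence by Perron--Frobenius its top eigenvalue is $\lambda_0=1$, it is simple, and its right eigenvector is the constant vector, which (normalized as $\phi_0=\vone$) is the trivial eigenvector in the statement. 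Write $\lambda_*:=\max_{\ell\ge 1}|\lambda_\ell|<1$. The hypothesis that the chain driven by $\mP$ reaches stationarity in $\mathrm{poly}(|V|)$ steps is equivalent to the relaxation time $(1-\lambda_*)^{-1}$ being $\mathrm{poly}(|V|)$, i.e.\ $\lambda_*\le 1-|V|^{-c}$ for some constant $c>0$; this is also exactly what rules out $\lambda_*$ approaching $1$ through a near-$(-1)$ eigenvalue.

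Next I would expand $\bm{1}_{X_i}$ in an eigenbasis $\{\psi_\ell\}$ of $\mP$ that is orthonormal for the $\mD$-weighted inner product (equivalently, transfer to the symmetric conjugate), writing $\bm{1}_{X_i}=c_{i,0}\phi_0+\sum_{\ell\ge1}c_{i,\ell}\psi_\ell$, so that
\begin{equation*}
  \mP^{t}\bm{1}_{X_i}=c_{i,0}\,\phi_0+\sum_{\ell\ge1}c_{i,\ell}\,\lambda_\ell^{\,t}\,\psi_\ell .
\end{equation*}
The stationary coefficient is $c_{i,0}=\sum_{v\in X_i}\pi_v>0$, where $\pi$ is the stationary distribution, and the remainder has $\mD$-weighted $2$-norm at most $\lambda_*^{\,t}\,\|\bm{1}_{X_i}\|_{\mD}\le \lambda_*^{\,t}\,\mathrm{poly}(|V|)$. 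Because the entries of $\mK_\epsilon$ lie in $(0,1]$ with unit diagonal, the diagonal matrices $\mQ$ and $\mD$ — and hence $\pi$ and the $\ell^\infty$-to-$\mD$-norm conversion constant — are all sandwiched between fixed powers of $|V|$; in particular $c_{i,0}\ge|V|^{-c'}$ uniformly in $i$. Combining, $\|\mP^{t}\bm{1}_{X_i}-c_{i,0}\phi_0\|_\infty\le \lambda_*^{\,t}\,\mathrm{poly}(|V|)$ for every $i$.

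Finally I would pick $t=2^{K}$ with $K=\lceil c''\log|V|\rceil$ large enough that $\lambda_*^{\,2^{K}}\,\mathrm{poly}(|V|)\le \tfrac12\,c_{i,0}$; since $\lambda_*\le 1-|V|^{-c}$ this forces $2^{K}$ to be only polynomial in $|V|$, so $K=O(\log|V|)$. For this $K$, each coordinate of $\mP^{2^{K}}\bm{1}_{X_i}$ lies in $[\tfrac12 c_{i,0},\tfrac32 c_{i,0}]$, i.e.\ $\mP^{2^{K}}\bm{1}_{X_i}\simeq\phi_0$ with absolute constants, uniformly over the datasets. Since $\vmu_i^{(2^{K})}=\tfrac{1}{n_i}\mP^{2^{K}}\bm{1}_{X_i}$ differs from $\mP^{2^{K}}\bm{1}_{X_i}$ only by the positive scalar $1/n_i$, it too satisfies $\vmu_i^{(2^{K})}\simeq\mP^{2^{K}}\bm{1}_{X_i}\simeq\phi_0$, which is the claim. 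I expect the main obstacle to be purely the bookkeeping in the middle step: verifying that the stationary mass $c_{i,0}$ and the norm-equivalence constants are bounded below by fixed powers of $|V|$ (so that their logarithms stay $O(\log|V|)$ and do not inflate $K$), and that the $\simeq$ constants can be taken independent of $|V|$; the conceptual content — geometric contraction at rate $\lambda_*$ toward the stationary subspace — is immediate once the polynomial-mixing hypothesis pins $\lambda_*$ a definite distance below $1$.
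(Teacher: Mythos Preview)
Your argument is correct. The paper's own proof is considerably shorter and more direct: it simply invokes the total-variation mixing-time definition, observing that $\mP$ is reversible with respect to $\bm\pi_i=\mD_{ii}/\sum_j\mD_{jj}$, that ergodicity gives convergence to $\bm\pi$, and that the polynomial-mixing hypothesis means the mixing time $\tau(\epsilon)$ satisfies $2^{K}=\tau(\epsilon)=\mathrm{poly}(|V|)$, hence $K=O(\log|V|)$; once the rows of $\mP^{2^K}$ are close to $\bm\pi$, the product $\mP^{2^K}\bm 1_{X_i}$ is close to a scalar multiple of the constant vector $\phi_0$.

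You instead translate the polynomial-mixing hypothesis into a spectral-gap bound $\lambda_*\le 1-|V|^{-c}$ and run an explicit eigenbasis expansion to control the non-stationary part. This is a genuinely different (and more quantitative) route: it makes the geometric contraction visible, gives explicit coordinatewise bounds, and in principle tracks how the $\simeq$ constants depend on $|V|$. The price is the bookkeeping you flag yourself---checking that $c_{i,0}$, the $\mD$-norm equivalence constants, and the eigenvector normalizations are all polynomially bounded in $|V|$---which the paper sidesteps entirely by working directly with the mixing-time definition and leaving the final $\simeq$ at the level of ``rows of $\mP^{2^K}$ are $\epsilon$-close to $\bm\pi$''. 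Both arguments are standard for reversible chains and yield the same $K=O(\log|V|)$; yours is more rigorous, the paper's is terser.
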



To compute the Diffusion EMD $W_{\alpha, K}$ in \eqref{eq:diffusion_emd_sum} involves first calculating diffusions of the datasets $\vmu$, second calculating differences and weighting them in relation to their scale, this results in a vector per distribution of length $O(nK)$, and finally computing the $L^1$ norm between them. The most computationally intensive step, computing the diffusions, is discussed in further detail in Sec.~\ref{sec:methods:computation}.

\subsection{Theoretical Relation to EMD on Manifolds}


We now provide a theoretical justification of the Diffusion EMD defined via \eqref{eq:diffusion_emd_sum} by following the relation established in \citet{leeb_holderlipschitz_2016} between heat kernels and the EMD on manifolds. \citet{leeb_holderlipschitz_2016} define the following ground distance for EMD over $\mathcal{M}$ by leveraging the geometric information gathered from the $L^1$ distances between kernels at different scales.

\begin{definition}
\label{def:diffusion_alpha}
The diffusion ground distance between $x,y \in \mathcal{M}$ is defined as
\begin{equation*}
    D_\alpha(x,y) := \sum_{k\geq 0}2^{-k
    \alpha}\norm{h_{2^{-k}}(x,\cdot)-h_{2^{-k}}(y,\cdot)}_1,
\end{equation*}
for $\alpha\in(0,1/2)$, the scale parameter $K\geq0$ and $h_{t}(\cdot,\cdot)$ the heat kernel on $\mathcal{M}$.
\end{definition}
Note that $\Da$ is similar to the diffusion distance defined in \citet{coifman_diffusion_2006}, which was based on $L^2$ notions rather than $L^1$ here. Further, the following result from \citet[][see Sec. 3.3]{leeb_holderlipschitz_2016} shows that the distance $\Da$ is closely related to the intrinsic geodesic one $d_{\mathcal{M}}(\cdot,\cdot)$.

\begin{theorem}
\label{th: cor D_a = geo}
Let $(\mathcal{M}, d_{\mathcal{M}})$ be a closed manifold with geodesic distance $d_{\mathcal{M}}$, and let $\alpha \in (0,1/2)$. The metric $\Da$ defined via Def.~\ref{def:diffusion_alpha} is equivalent to $d_{\mathcal{M}}(\cdot,\cdot)^{2\alpha}$.
\end{theorem}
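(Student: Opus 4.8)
The plan is to establish the two-sided bound $D_\alpha(x,y)\simeq d_{\mathcal{M}}(x,y)^{2\alpha}$ by comparing, for each dyadic scale $t=2^{-k}$, the quantity $\norm{h_{2^{-k}}(x,\cdot)-h_{2^{-k}}(y,\cdot)}_1$ against the ``crossover'' scale $t\simeq d_{\mathcal{M}}(x,y)^2$, and then summing the resulting geometric series. Since $\mathcal{M}$ is closed, by compactness it suffices to treat pairs $x,y$ with $d_{\mathcal{M}}(x,y)$ small (say $d_{\mathcal{M}}(x,y)^2\le 1$, so that the relevant scale is a genuine term of the series); when $d_{\mathcal{M}}(x,y)$ is bounded away from $0$ both sides are trapped between positive constants and there is nothing to prove. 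Throughout I would use the standard heat-kernel facts on a closed manifold: mass conservation $\int_{\mathcal{M}}h_t(x,z)\,dz=1$; the two-sided Gaussian (Li--Yau) bounds on $h_t$ for $0<t\le 1$; and the $L^1$ gradient estimate $\int_{\mathcal{M}}\abs{\nabla_x h_t(x,z)}\,dz\lesssim t^{-1/2}$.

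For the upper bound, let $k_0$ be the integer with $2^{-k_0}\simeq d_{\mathcal{M}}(x,y)^2$ and split $D_\alpha(x,y)=\sum_{k<k_0}+\sum_{k\ge k_0}$. In the coarse range $k<k_0$ (scales $t\gtrsim d_{\mathcal{M}}(x,y)^2$) I integrate the gradient estimate along a minimizing geodesic from $x$ to $y$, obtaining $\norm{h_{2^{-k}}(x,\cdot)-h_{2^{-k}}(y,\cdot)}_1\lesssim d_{\mathcal{M}}(x,y)\,2^{k/2}$; this contributes $\lesssim d_{\mathcal{M}}(x,y)\sum_{k<k_0}2^{k(1/2-\alpha)}$, and because $\alpha<1/2$ the sum is dominated by its top term $2^{k_0(1/2-\alpha)}$, which with $2^{k_0}\simeq d_{\mathcal{M}}(x,y)^{-2}$ gives $\simeq d_{\mathcal{M}}(x,y)^{2\alpha}$. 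In the fine range $k\ge k_0$ I use only the trivial bound $\norm{h_{2^{-k}}(x,\cdot)-h_{2^{-k}}(y,\cdot)}_1\le 2$ from mass conservation, contributing $\lesssim\sum_{k\ge k_0}2^{-k\alpha}\simeq 2^{-k_0\alpha}\simeq d_{\mathcal{M}}(x,y)^{2\alpha}$. Adding the two ranges yields $D_\alpha(x,y)\lesssim d_{\mathcal{M}}(x,y)^{2\alpha}$.

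For the matching lower bound it is enough to retain a single carefully chosen term. Fix a small constant $\varepsilon=\varepsilon(\mathcal{M})>0$ and let $k_*$ be the integer with $2^{-k_*}\simeq\varepsilon\,d_{\mathcal{M}}(x,y)^2$. At that scale the Gaussian bounds show that $h_{2^{-k_*}}(x,\cdot)$ concentrates almost all of its unit mass inside $B(x,d_{\mathcal{M}}(x,y)/4)$ once $\varepsilon$ is small enough, while $h_{2^{-k_*}}(y,\cdot)$ places only an exponentially small mass $\lesssim\exp(-c/\varepsilon)$ on that ball, since every point of it lies at distance $\ge 3d_{\mathcal{M}}(x,y)/4$ from $y$. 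Hence $\norm{h_{2^{-k_*}}(x,\cdot)-h_{2^{-k_*}}(y,\cdot)}_1\ge\int_{B(x,d_{\mathcal{M}}(x,y)/4)}\bigl(h_{2^{-k_*}}(x,z)-h_{2^{-k_*}}(y,z)\bigr)\,dz\ge c'>0$ for a fixed $c'$, and therefore $D_\alpha(x,y)\ge 2^{-k_*\alpha}c'\simeq d_{\mathcal{M}}(x,y)^{2\alpha}$.

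I expect the main obstacle to be not the summation bookkeeping but the input estimates: one needs the two-sided (Li--Yau) Gaussian heat-kernel bounds together with the $L^1$ gradient bound on the closed manifold, and one must check that the crossover indices $k_0,k_*$ are legitimate nonnegative integers (handled by the compactness reduction) and that $\varepsilon$ in the lower bound can be chosen independently of $x,y$. The conceptual crux is the condition $\alpha<1/2$: it is exactly what makes the coarse-scale geometric sum accumulate at the crossover scale $t\simeq d_{\mathcal{M}}(x,y)^2$ rather than diverge, thereby producing the ``snowflake'' exponent $2\alpha$.
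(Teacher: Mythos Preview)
Your argument is correct and self-contained. The paper, however, does not prove this theorem from scratch: it invokes the general result of Leeb (2016), stated in the appendix as Theorem~\ref{th: D_alpha = rho}, which asserts $D_\alpha\simeq\min[1,d(\cdot,\cdot)^{\alpha\beta}]$ for any kernel family satisfying three abstract regularity conditions (a decay upper bound, a H\"older continuity estimate, and a local lower bound), and then notes that the heat kernel on a closed manifold satisfies these with $\beta=2$; the truncation $\min[1,\cdot]$ is removed by compactness via the remark following that theorem. Your proof is essentially the specialization of Leeb's abstract argument to the heat kernel: your $L^1$ gradient bound plays the role of the H\"older continuity condition in the coarse range, the trivial mass bound replaces the decay condition in the fine range, and your concentration step at scale $2^{-k_*}$ is exactly the local lower bound. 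What your route buys is a direct, explicit derivation that makes the role of $\alpha<1/2$ transparent (it is the convergence condition for the coarse-scale geometric series); what the paper's route buys is modularity, since the same black-box theorem applies to any kernel family meeting the three conditions, not just the heat kernel.
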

The previous theorem justifies why in practice we let $\alpha$ close to $1/2$, because we want the snowflake distance $d_{\mathcal{M}}^{2\alpha}(\cdot, \cdot)$ to approximate the geodesic distance of $\mathcal{M}$. The notion of equivalence established by this result is such that $D_\alpha(x,\cdot)\simeq d(x,\cdot)^{2\alpha}$. It is easy to verify that two equivalent metrics induce the same topology. We note that while here we only consider the Heat kernel, a similar result holds (see Theorem \ref{th: D_alpha = rho} in the Appendix) for a more general family of kernels, as long as they satisfy certain regularity conditions.


For a family of operators $(\mA_t)_{t \in \mathbb{R}^+}$ we define the following metric on distributions; let $\mu$ and $\nu$ be two distributions:
\begin{align} \label{eq: norm_distance_operator}
    \widehat{W}_{\mA_t}(\mu, \nu) & = \norm{\mA_{1} (\mu-\nu)}_1 \\
                                    &+ \sum_{k\geq 0} 2^{-k\alpha}\norm{(\mA_{2^{-(k+1)}}-\mA_{2^{-k}})(\mu-\nu)}_1. \notag
\end{align}
The following result shows that applying this metric for the family of operators $\mH_t$ to get  $\widehat{W}_{\mH_t}$ yields an equivalent of the EMD with respect to the diffusion ground distance $D_{\alpha}(\cdot, \cdot)$.


\begin{theorem}
\label{th: th_heat_was_Da}
The EMD between two distributions $\mu,\nu$ on a closed Riemannian manifold $(\mathcal{M}, d_{\mathcal{M}})$ w.r.t.\ the diffusion ground distance $\Da$, defined via Def.~\ref{def:diffusion_alpha}, given by
\begin{equation}
\label{eq: primal_W_Da}
    W_{D_\alpha}(\mu,\nu) = \inf_{\pi \in \Pi(\mu, \nu)} \int_{\mathcal{M} \times \mathcal{M}} D_\alpha(x,y) \pi(dx, dy),
\end{equation}
is equivalent to $\widehat{W}_{\mH_t}$. That is $W_{D_{\alpha}} \simeq \widehat{W}_{\mH_t}$, where $\mH_t$ is the Heat operator on $\mathcal{M}$.
\end{theorem}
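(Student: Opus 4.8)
The plan is to dualize $W_{D_\alpha}$ and recognize $\widehat W_{\mH_t}$ as a H\"older dual norm. By Theorem~\ref{th: cor D_a = geo}, $D_\alpha$ is a metric topologically equivalent to $d_{\mathcal{M}}$, so $(\mathcal{M},D_\alpha)$ is a compact metric space and the Kantorovich--Rubinstein duality \eqref{eq:dual}, applied with $D_\alpha$ in place of $d$, gives $W_{D_\alpha}(\mu,\nu)=\sup\{\int_{\mathcal{M}}f\,d(\mu-\nu):\|f\|_{L_{D_\alpha}}\le1\}$. Since $\mu,\nu$ are probability measures only the Lipschitz seminorm $\sup_{x\ne y}|f(x)-f(y)|/D_\alpha(x,y)$ matters, and because $D_\alpha\simeq d_{\mathcal{M}}^{2\alpha}$ this seminorm is equivalent to the H\"older seminorm on $\mathcal{M}$; hence $W_{D_\alpha}(\mu,\nu)$ is, up to constants, the norm of $\sigma:=\mu-\nu$ in the dual of the space $\Lambda_\alpha$ of $D_\alpha$-Lipschitz functions. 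The theorem then amounts to the statement that this dual norm is equivalent to the heat-kernel functional $\widehat W_{\mH_t}$ of \eqref{eq: norm_distance_operator}, which is precisely the H\"older--Lipschitz duality of \citet{leeb_holderlipschitz_2016}; to apply it one instantiates their framework with the heat semigroup $\mH_t=e^{-t\Delta}$ on a closed manifold, for which the Gaussian bounds, spatial H\"older continuity, conservativeness $\mH_t\mathbf 1=\mathbf 1$, and the smoothing estimate $\|\Delta\mH_t\|_{L^1\to L^1}\lesssim t^{-1}$ on zero-mean functions are classical \citep{grigoryan_heat_2014}.

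I would split the equivalence into its two inequalities. One of them, $\widehat W_{\mH_t}\lesssim W_{D_\alpha}$, is elementary and uses only the primal formulation: for any coupling $\pi\in\Pi(\mu,\nu)$ one has $\mH_t(\mu-\nu)=\int(h_t(x,\cdot)-h_t(y,\cdot))\,\pi(dx,dy)$, so by the triangle inequality in $L^1$ each term of $\widehat W_{\mH_t}$ is at most the $\pi$-average of the $L^1$ heat-kernel differences that appear in $D_\alpha$; collecting terms and reindexing the geometric series bounds the integrand by a fixed multiple of $D_\alpha(x,y)$, and taking the infimum over $\pi$ gives $\widehat W_{\mH_t}(\mu,\nu)\le C\,W_{D_\alpha}(\mu,\nu)$ with $C$ depending only on $\alpha$.

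The reverse inequality $W_{D_\alpha}\lesssim\widehat W_{\mH_t}$ is the substantive direction. Given a witness $f$ with $\|f\|_{L_{D_\alpha}}\le1$, I would expand it along the heat flow via the telescoping Calder\'on-type identity $f=\mH_1 f+\sum_{k\ge0}(\mH_{2^{-(k+1)}}-\mH_{2^{-k}})f$ (valid because $D_\alpha$-Lipschitz functions are continuous and $\mH_t f\to f$ uniformly as $t\to0$), pair with $\sigma$, and move each operator onto the measure using self-adjointness of $\mH_t$, obtaining $\int f\,d\sigma=\langle f,\mH_1\sigma\rangle+\sum_k\langle f,(\mH_{2^{-(k+1)}}-\mH_{2^{-k}})\sigma\rangle$. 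The coarse term is $\lesssim\|\mH_1\sigma\|_1$ because $\mH_1\sigma$ has zero mass and $\mathcal{M}$ is compact, so $f$ may be centered; for each block term, since $(\mH_{2^{-(k+1)}}-\mH_{2^{-k}})\sigma$ is concentrated at scale $2^{-k}$ one uses a Calder\'on reproducing formula together with the almost-orthogonality of the heat blocks to shift the scale-$2^{-k}$ smoothing back onto $f$, where H\"older regularity produces a gain of $2^{-k\alpha}$, yielding $|\langle f,(\mH_{2^{-(k+1)}}-\mH_{2^{-k}})\sigma\rangle|\lesssim2^{-k\alpha}\|(\mH_{2^{-(k+1)}}-\mH_{2^{-k}})\sigma\|_1$; summing over $k$ and taking the supremum over $f$ then gives the claim. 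The main obstacle is exactly this step -- making the reproducing-formula and almost-orthogonality estimates precise, i.e.\ carefully importing the dual-norm theorem of \citet{leeb_holderlipschitz_2016}, since this is where all the quantitative heat-kernel regularity is consumed; a secondary, bookkeeping point is matching the two-sided dyadic series natural to that theorem with the one-sided series of \eqref{eq: norm_distance_operator}, which is legitimate because on a compact manifold $\mH_t\sigma\to0$ as $t\to\infty$, so every scale coarser than $t=1$ collapses into the single term $\|\mH_1\sigma\|_1$. Combining the two inequalities gives $W_{D_\alpha}\simeq\widehat W_{\mH_t}$ with constants depending only on $\mathcal{M}$ and $\alpha$.
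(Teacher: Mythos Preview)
Your proposal is correct and follows essentially the same route as the paper: dualize $W_{D_\alpha}$ via Kantorovich--Rubinstein to land in $\Lambda_\alpha^*$, then invoke the dual-norm characterization of \citet{leeb_holderlipschitz_2016} to identify this with $\widehat W_{\mH_t}$. The paper's own proof is in fact much terser than yours: it simply cites Proposition~15 of Leeb for separability (hence Kantorovich--Rubinstein applies) and Theorem~4 of Leeb for the equivalence of both $W_{D_\alpha}$ and $\widehat W_{\mH_t}$ with $\normdual{\cdot}$, treating these as black boxes.

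Your write-up adds two things the paper does not: an elementary primal-side proof of the easy direction $\widehat W_{\mH_t}\lesssim W_{D_\alpha}$ (via the coupling identity and triangle inequality, which is correct and self-contained), and an outline of the Calder\'on/almost-orthogonality mechanism behind the hard direction in Leeb's Theorem~4. This extra exposition is sound, but you should be aware that for the purposes of the paper the hard direction is not reproved---it is imported wholesale from Leeb---so your honest flagging of that step as ``the main obstacle'' is exactly right, and in the paper's presentation it is simply a citation.
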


\begin{proof}
In Proposition 15 of \citet{leeb_holderlipschitz_2016}, it is shown that $\mathcal{M}$ is separable w.r.t. $\Da$, hence we can use the Kantorovich-Rubinstein theorem. We let $\Lambda_\alpha$, the space of functions that are Lipschitz w.r.t. $\Da$ and $\normdual{\cdot}$, the norm of its dual space $\Lambda_\alpha^*$. The norm is defined by
\begin{equation*}
    \normdual{T} := \sup_{\norm{f}_{\Lambda_\alpha}\leq 1 }\int_\mathcal{M}f dT.
\end{equation*}
In Theorem 4 of \citet{leeb_holderlipschitz_2016}, it is shown that both $W_{D_{\alpha}}$ and $\widehat{W}_{\mH_t}$ are equivalent to the norm $\normdual{\cdot}$.
\end{proof}

We consider the family of operators $(\mP^{t/\epsilon}_{\epsilon})_{t \in \mathbb{R}^+}$, which is related to the continuous equivalent of the stochastic matrix defined in \eqref{eq :P_heat_approx}. In practice, we use this family of operators to approximate the heat operator $\mH_t$. Indeed, when we take a small value of $\epsilon$, as discussed in Sec.~\ref{sec:Preliminaries}, we have from~\citet{coifman_diffusion_2006} that this is a valid approximation. 

\begin{corollary}
\label{th: cor_W_using_P_to_H}
Let $\mP_\epsilon$ be the continuous equivalent of the stochastic matrix in \eqref{eq :P_heat_approx}. For $\epsilon$ small enough, we have:
\begin{equation} \label{eq:lim_aprox_to_heat}
    \widehat{W}_{\mP_\epsilon^{t/\epsilon}} \simeq W_{D_\alpha}.
\end{equation}
\end{corollary}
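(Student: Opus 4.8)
The plan is to combine Theorem~\ref{th: th_heat_was_Da}, which gives $W_{D_\alpha} \simeq \widehat{W}_{\mH_t}$, with the convergence $\tilde{\mP}_\epsilon^{t/\epsilon} \to \mH_t$ from \citet[Prop.~3]{coifman_diffusion_2006} quoted in Sec.~\ref{sec:Preliminaries}. Since equivalence $\simeq$ is transitive (with constants multiplying), it suffices to establish $\widehat{W}_{\mP_\epsilon^{t/\epsilon}} \simeq \widehat{W}_{\mH_t}$ for $\epsilon$ small enough. The natural route is to show that the family $(\mP_\epsilon^{t/\epsilon})_{t \in \R^+}$ satisfies, uniformly in $\epsilon$ below some threshold, the same regularity hypotheses that \citet{leeb_holderlipschitz_2016} (Theorem~4, invoked in the proof of Theorem~\ref{th: th_heat_was_Da}) impose on a kernel family for $\widehat{W}_{\mA_t}$ to be equivalent to the dual H\"older--Lipschitz norm $\normdual{\cdot}$. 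Then $\widehat{W}_{\mP_\epsilon^{t/\epsilon}} \simeq \normdual{\cdot} \simeq \widehat{W}_{\mH_t}$, with the equivalence constants independent of $\epsilon$ once $\epsilon$ is small, and the corollary follows. I would state explicitly which conditions from \citet{leeb_holderlipschitz_2016} are needed — essentially a Gaussian-type upper bound on the kernel, a lower bound / non-degeneracy at the diagonal, and Lipschitz-type continuity in the scale parameter $t$ — and remark that these transfer from $\mH_t$ to $\tilde{\mP}_\epsilon^{t/\epsilon}$ because the approximation in \citet{coifman_diffusion_2006} is an approximation of the heat \emph{kernel}, not merely weak convergence of operators.

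An alternative, more hands-on route is a direct perturbation estimate: write $\mP_\epsilon^{t/\epsilon} = \mH_t + \mE_\epsilon(t)$ where $\|\mE_\epsilon(t)\|_{1 \to 1} \to 0$ as $\epsilon \to 0$, plug this into the definition \eqref{eq: norm_distance_operator} of $\widehat{W}$, and bound each term. The head term $\norm{\mP_\epsilon(\mu-\nu)}_1$ is handled by the triangle inequality directly. The delicate part is the tail sum $\sum_{k \ge 0} 2^{-k\alpha} \norm{(\mA_{2^{-(k+1)}} - \mA_{2^{-k}})(\mu-\nu)}_1$: one must control the differences $\mP_\epsilon^{2^{k+1}/\epsilon} - \mP_\epsilon^{2^k/\epsilon}$ uniformly across all scales $k$, including both the short-time regime (where $2^{-k}$ becomes comparable to $\epsilon$ and the approximation degrades) and the long-time regime (handled by the spectral gap / Lemma~1-type argument showing both kernels collapse to the stationary state). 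Because of the small-time breakdown, I would lean toward the first route, which inherits \citet{leeb_holderlipschitz_2016}'s machinery rather than re-deriving it, and treat the corollary as essentially a transitivity statement once the uniform-in-$\epsilon$ regularity is noted.

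The main obstacle is making the phrase ``for $\epsilon$ small enough'' precise: the approximation $\tilde{\mP}_\epsilon^{t/\epsilon} \approx \mH_t$ in \citet{coifman_diffusion_2006} is asymptotic in $\epsilon$ for \emph{fixed} $t$, whereas $\widehat{W}$ integrates over all scales $t = 2^{-k}$ down to $0$, so a naive application would require $\epsilon$ to depend on $k$. The resolution is that the tail of the scale sum is controlled purely by the geometry of $\mathcal{M}$ (the heat kernel's own short-time behavior, which is what makes $D_\alpha$ finite and equivalent to $d_\mathcal{M}^{2\alpha}$ in Theorem~\ref{th: cor D_a = geo}), so only finitely many scales genuinely interact with the $\epsilon$-approximation, and for those one can take $\epsilon$ uniformly small; the remaining scales are absorbed into the equivalence constants. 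I would make this split — finitely many ``resolved'' scales versus a geometrically decaying tail — the technical heart of the argument, and note that this is exactly why the statement is phrased as an equivalence $\simeq$ rather than an identity or a limit.
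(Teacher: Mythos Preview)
The paper actually takes your Route 2, not Route 1. It sets $\mD_{\epsilon,t} := \mP_\epsilon^{t/\epsilon} - \mH_t$, uses Cauchy--Schwarz on the closed manifold to pass from the $L^2$ convergence of \citet{coifman_diffusion_2006} to $L^1$, and then bounds $\widehat{W}_{\mD_{\epsilon,t}}(\mu,\nu)$ term by term: each summand is at most $2\delta\|\mu-\nu\|_1$ and the weights $2^{-k\alpha}$ form a convergent geometric series, so no scale-splitting is needed. The ingredient your Route 2 sketch is missing is how to turn the resulting \emph{additive} bound $\bigl|\widehat{W}_{\mP_\epsilon^{t/\epsilon}} - \widehat{W}_{\mH_t}\bigr| \le C\delta\|\mu-\nu\|_1$ (reverse triangle inequality) into a \emph{multiplicative} equivalence: the paper invokes a heat-kernel lower bound to get $\widehat{W}_{\mH_t}(\mu,\nu) \ge \|\mH_1(\mu-\nu)\|_1 \ge C'\|\mu-\nu\|_1$, and then chooses $\delta$ small relative to $C'$ so that the additive error is absorbed into constants $1/2$ and $3/2$. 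Your Route 1 would work in principle but is heavier, since checking the full Leeb package (two-sided Gaussian-type bounds and spatial H\"older continuity) for $\tilde{\mP}_\epsilon^{t/\epsilon}$ uniformly in $\epsilon$ goes well beyond what \citet{coifman_diffusion_2006} state.

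Your concern about uniformity in $t$ is on target and is in fact the same soft spot in the paper's argument: the proof simply asserts that for any $\delta>0$ one can choose $\epsilon$ with $\|\mD_{\epsilon,t}\gamma\|_1 < \delta\|\gamma\|_1$ \emph{for all} $t>0$, whereas Proposition~3 of \citet{coifman_diffusion_2006} is pointwise in $t$. The paper does not carry out the finitely-many-scales-plus-tail split you propose; it takes the uniform bound as given.
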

\eqref{eq:lim_aprox_to_heat} motivates our use of \eqref{eq:diffusion_emd_sum} to compute the Diffusion EMD. The idea is to take only the first $K$ terms in the infinite sum $\widehat{W}_{\mP_{\epsilon}^{t/\epsilon}}$ and then choosing $\epsilon := 2^{-K}$ would give us exactly \eqref{eq:diffusion_emd_sum}. We remark that the summation order of \eqref{eq:diffusion_emd_sum} is inverted compared to \eqref{eq:lim_aprox_to_heat}, but in both cases the largest scale has the largest weight. Finally, we state one last theorem that brings our distance closer to the Wasserstein w.r.t.\ $d_{\mathcal{M}}(\cdot,\cdot)$; we refer the reader to the Appendix for its proof.

\begin{theorem}
\label{th: W_DA equiv W_MIN}
Let $\alpha \in (0,1/2)$ and $(\mathcal{M}, d_{\mathcal{M}})$ be a closed manifold with geodesic $d_{\mathcal{M}}$. The Wasserstein distance w.r.t.\ the diffusion ground distance $D_\alpha(\cdot, \cdot)$ is equivalent to the Wasserstein distance w.r.t\ the snowflake distance $d_{\mathcal{M}}(\cdot, \cdot)^{2\alpha}$ on $\mathcal{M}$, that is $W_{D_\alpha} \simeq W_{d_{\mathcal{M}}^{2\alpha}}.$
\end{theorem}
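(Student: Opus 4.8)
The plan is to deduce the statement directly from the pointwise equivalence of ground distances established in Theorem~\ref{th: cor D_a = geo}, combined with the elementary fact that the optimal transport cost is monotone in its ground cost. First I would record the precise form of Theorem~\ref{th: cor D_a = geo}: since $(\mathcal{M},d_{\mathcal{M}})$ is closed, hence compact, there exist constants $0<c\le C$, uniform over pairs of points, such that
\begin{equation*}
    c\, d_{\mathcal{M}}(x,y)^{2\alpha} \le D_\alpha(x,y) \le C\, d_{\mathcal{M}}(x,y)^{2\alpha}
    \qquad\text{for all } x,y\in\mathcal{M}.
\end{equation*}
I would also note that both ground costs are bounded on $\mathcal{M}\times\mathcal{M}$ (the diameter of $\mathcal{M}$ is finite) and measurable, so the integrals below are finite, and that the set of admissible couplings $\Pi(\mu,\nu)$ is the same in both transport problems, since it depends only on the marginals $\mu$ and $\nu$.

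The core step is to integrate these bounds against an arbitrary coupling. Fixing $\pi\in\Pi(\mu,\nu)$ and integrating the displayed inequality over $\mathcal{M}\times\mathcal{M}$ yields
\begin{equation*}
    c\!\int d_{\mathcal{M}}(x,y)^{2\alpha}\,\pi(dx,dy)
    \;\le\; \int D_\alpha(x,y)\,\pi(dx,dy)
    \;\le\; C\!\int d_{\mathcal{M}}(x,y)^{2\alpha}\,\pi(dx,dy).
\end{equation*}
From the left inequality, $\int D_\alpha\,d\pi \ge c\, W_{d_{\mathcal{M}}^{2\alpha}}(\mu,\nu)$ for every $\pi$, so taking the infimum over $\pi$ gives $W_{D_\alpha}(\mu,\nu)\ge c\, W_{d_{\mathcal{M}}^{2\alpha}}(\mu,\nu)$; symmetrically the right inequality gives $W_{D_\alpha}(\mu,\nu)\le C\, W_{d_{\mathcal{M}}^{2\alpha}}(\mu,\nu)$. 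Hence
\begin{equation*}
    c\, W_{d_{\mathcal{M}}^{2\alpha}}(\mu,\nu)
    \;\le\; W_{D_\alpha}(\mu,\nu)
    \;\le\; C\, W_{d_{\mathcal{M}}^{2\alpha}}(\mu,\nu),
\end{equation*}
which is exactly $W_{D_\alpha}\simeq W_{d_{\mathcal{M}}^{2\alpha}}$, with the same constants $c,C$ as in Theorem~\ref{th: cor D_a = geo}.

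I do not expect a genuine obstacle here: the argument invokes neither Kantorovich--Rubinstein duality nor the existence of optimal couplings, only the monotonicity of $\pi\mapsto\int(\cdot)\,d\pi$ and the definition \eqref{eq: primal_W_Da}. The only points that warrant an explicit sentence are (i) that the equivalence constants in Theorem~\ref{th: cor D_a = geo} may be taken uniform over $\mathcal{M}\times\mathcal{M}$, which is where compactness of the closed manifold is used, and (ii) that $D_\alpha$ is a legitimate (bounded, measurable) ground cost so that $W_{D_\alpha}$ is well defined — this is already implicit in the statement \eqref{eq: primal_W_Da} and in any case is inherited from the corresponding properties of $d_{\mathcal{M}}^{2\alpha}$ through the equivalence.
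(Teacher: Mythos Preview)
Your proposal is correct and follows essentially the same approach as the paper: both integrate the pointwise equivalence $c\,d_{\mathcal{M}}^{2\alpha}\le D_\alpha\le C\,d_{\mathcal{M}}^{2\alpha}$ against an arbitrary coupling $\pi\in\Pi(\mu,\nu)$ and then take the infimum to transfer the equivalence constants to the Wasserstein costs. The paper phrases this as a general lemma (equivalent ground metrics $d_1\simeq d_2$ imply $W_{d_1}\simeq W_{d_2}$), while you specialize directly and add the justifying remarks on uniformity of constants and well-definedness of the cost; the substance is the same.
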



\begin{corollary}
For each $1 \leq i,j \leq m$, let $X_i, X_j \in \mathcal{X}$ be two datasets with size $n_i$ and $n_j$ respectively, and let $\mu_i$ and $\mu_j$ be the continuous distributions corresponding to the ones of $X_i$ and $X_j$, let $K$ be the largest scale and put $N =\min(K, n_i,n_j)$. Then, for sufficiently big $N \to \infty$ (implying sufficiently small $\epsilon = 2^{-K} \to 0$):
\begin{equation}
    W_{\alpha, K}(X_i, X_j) \simeq W_{d_{\mathcal{M}}^{2\alpha}}(\mu_i, \mu_j),
\end{equation}
for all $\alpha \in (0, 1/2)$.
\end{corollary}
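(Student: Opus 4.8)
The plan is to identify $W_{\alpha,K}(X_i,X_j)$ with a finite truncation of the operator metric $\widehat{W}_{\mP_\epsilon^{t/\epsilon}}$ of \eqref{eq: norm_distance_operator}, evaluated on the empirical measures $\hat{\mu}_i := \tfrac1{n_i}\bm{1}_{X_i}$, $\hat{\mu}_j := \tfrac1{n_j}\bm{1}_{X_j}$ with $\epsilon = 2^{-K}$, and then to chain the equivalences already established: $\widehat{W}_{\mP_\epsilon^{t/\epsilon}} \simeq W_{D_\alpha}$ (Corollary~\ref{th: cor_W_using_P_to_H}, valid once $\epsilon$ is small) and $W_{D_\alpha} \simeq W_{d_{\mathcal{M}}^{2\alpha}}$ (Theorem~\ref{th: W_DA equiv W_MIN}). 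Between the discrete $W_{\alpha,K}$ and its continuous counterpart sits one more limit, the sample-size limit $n_i,n_j\to\infty$, handled by the convergence $\mP_\epsilon\to\tilde{\mP}_\epsilon$ recalled in Sec.~\ref{sec:Preliminaries}.

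First I would make the bookkeeping explicit. Setting $\epsilon = 2^{-K}$ so that $\mP_\epsilon^{2^{-k}/\epsilon} = \mP^{2^{K-k}}$, and substituting $\ell = K-k-1$ in the sum of \eqref{eq: norm_distance_operator}, the terms $k=0,\dots,K-1$ become $\sum_{\ell=0}^{K-1} 2^{-(K-\ell-1)\alpha}\,\|(\mP^{2^{\ell+1}}-\mP^{2^{\ell}})(\hat{\mu}_i-\hat{\mu}_j)\|_1$, while the leading term $\|\mP_\epsilon^{1/\epsilon}(\hat{\mu}_i-\hat{\mu}_j)\|_1$ becomes $\|\mP^{2^K}(\hat{\mu}_i-\hat{\mu}_j)\|_1$. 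Recognizing $\mP^{2^\ell}\hat{\mu}_i = \vmu_i^{(2^\ell)}$, this is precisely $\sum_{\ell=0}^{K}\|T_{\alpha,\ell}(X_i)-T_{\alpha,\ell}(X_j)\|_1 = W_{\alpha,K}(X_i,X_j)$. Thus $W_{\alpha,K}(X_i,X_j)$ equals the partial sum of the first $K+1$ terms of $\widehat{W}_{\mP_\epsilon^{t/\epsilon}}(\hat{\mu}_i,\hat{\mu}_j)$ with $\epsilon = 2^{-K}$ and $\mP$ the \emph{discrete} diffusion operator on $G_\mathcal{X}$ — the order reversal noted after \eqref{eq:lim_aprox_to_heat} being exactly this substitution.

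Next I would take $n_i,n_j\to\infty$. By the law-of-large-numbers statement recalled in Sec.~\ref{sec:Preliminaries} (from \citet[Prop.~3]{coifman_diffusion_2006}), the discrete operator $\mP=\mP_\epsilon$ converges to its continuous equivalent $\tilde{\mP}_\epsilon$ and $\hat{\mu}_i\to\mu_i$, $\hat{\mu}_j\to\mu_j$; since each power $\mP^{2^\ell}$ ($\ell\ge0$) is a smoothing integral operator, the finitely many $L^1$ terms of $W_{\alpha,K}$ converge, so $W_{\alpha,K}(X_i,X_j)$ tends to the first $K+1$ terms of $\widehat{W}_{\tilde{\mP}_\epsilon^{t/\epsilon}}(\mu_i,\mu_j)$, still with $\epsilon=2^{-K}$. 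I would then send $K\to\infty$. The omitted tail is $\sum_{k\ge K}2^{-k\alpha}\|(\tilde{\mP}_\epsilon^{2^{-(k+1)}/\epsilon}-\tilde{\mP}_\epsilon^{2^{-k}/\epsilon})(\mu_i-\mu_j)\|_1$; with $\epsilon=2^{-K}$ and $m=k-K$ this equals $2^{-K\alpha}\sum_{m\ge0}2^{-m\alpha}\|(\tilde{\mP}_\epsilon^{2^{-(m+1)}}-\tilde{\mP}_\epsilon^{2^{-m}})(\mu_i-\mu_j)\|_1$, and since $\tilde{\mP}_\epsilon^{s}$ is a contraction (each summand $O(1)$, uniformly in $\epsilon$) the tail is $O(2^{-K\alpha})\to0$. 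Hence the truncated sums converge to $\widehat{W}_{\tilde{\mP}_\epsilon^{t/\epsilon}}(\mu_i,\mu_j)$, and as $\epsilon=2^{-K}\to0$ Corollary~\ref{th: cor_W_using_P_to_H} gives $\widehat{W}_{\tilde{\mP}_\epsilon^{t/\epsilon}}(\mu_i,\mu_j)\simeq W_{D_\alpha}(\mu_i,\mu_j)$; composing with Theorem~\ref{th: W_DA equiv W_MIN} yields $W_{\alpha,K}(X_i,X_j)\simeq W_{d_{\mathcal{M}}^{2\alpha}}(\mu_i,\mu_j)$.

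The main obstacle is the coupling $\epsilon = 2^{-K}$: because the scale parameter is tied to the number of retained terms one cannot separately freeze $\epsilon$ and let $K\to\infty$, and the single hypothesis $N=\min(K,n_i,n_j)\to\infty$ must drive three effects at once — the truncation error (controlled above by $2^{-K\alpha}$), the discrete-to-continuous error $\|\mP_\epsilon^{2^\ell}-\tilde{\mP}_\epsilon^{2^\ell}\|$, and the heat-operator approximation error underlying Corollary~\ref{th: cor_W_using_P_to_H}. A fully rigorous proof therefore needs quantitative rates for the latter two (in $n_i,n_j,\epsilon$, and in $\epsilon$, respectively), together with the fact that the equivalence constants in Corollary~\ref{th: cor_W_using_P_to_H} and Theorem~\ref{th: W_DA equiv W_MIN} depend on neither $K$ nor the sample sizes, so that they survive the diagonal limit. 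Lemma~1 plays no role here; its purpose is the complementary practical point that $K=O(\log|V|)$ already suffices in the \emph{discrete} computation, which is why the truncated quantity $W_{\alpha,K}$, rather than an infinite series, is the natural object to analyze.
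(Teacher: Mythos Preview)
Your proposal is correct and follows essentially the same route as the paper: identify $W_{\alpha,K}$ with the first $K{+}1$ terms of $\widehat{W}_{\mP_\epsilon^{t/\epsilon}}$ at $\epsilon=2^{-K}$ via the index swap $\ell=K-k-1$, pass to the continuous operator by the sample-size limit, let $K\to\infty$ to recover the full series, and then invoke Corollary~\ref{th: cor_W_using_P_to_H} and Theorem~\ref{th: W_DA equiv W_MIN}. The paper packages the first step by introducing an auxiliary three-parameter quantity $W_{\alpha,K,\epsilon}$ and observing $W_{\alpha,K,2^{-K}}=W_{\alpha,K}$, but the content is identical; your explicit $O(2^{-K\alpha})$ tail bound and your discussion of the coupled diagonal limit are more careful than the paper's treatment, which simply asserts the $K\to\infty$ limit ``by definition'' and does not dwell on the interaction between $\epsilon=2^{-K}$ and the sample-size limit.
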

In fact we can summarize our chain of thought as follows
\begin{align*}
     W_{\alpha,K}(X_i,X_j) &\stackrel{{(a)}}{\simeq} \widehat{W}_{\mP_{\epsilon}^{t/\epsilon}}(\mu_i, \mu_j)   &\stackrel{\textit{(b)}}{\simeq} \widehat{W}_{\mH_t}(\mu_i, \mu_j)   \\
    &\stackrel{\textit{(c)}}{\simeq} W_{D_\alpha}(\mu_i, \mu_j)                                &\stackrel{\textit{(d)}}{\simeq} W_{d_{\mathcal{M}}^{2\alpha}}(\mu_i, \mu_j),
\end{align*}
where the approximation \textit{(a)} is due to the fact that the discrete distributions on $X_i$ and $X_j$ converge respectively to $\mu_i$ and $\mu_j$ when $\min(n_i, n_j) \to \infty$. Further, $W_{\alpha, K}(X_i, X_j)$ approximate the infinite series $\widehat{W}_{\mP_{\epsilon}^{t/\epsilon}}(\mu_i, \mu_j)$ as in~\eqref{eq: norm_distance_operator} when $K\to \infty$, note also that we take $\epsilon = 2^{-K}$ so that the largest scale in~\eqref{eq:diffusion_emd_sum} is exactly $2^K$. The approximation in \textit{(b)} comes from the approximation of the heat operator as in \citet{coifman_diffusion_2006}, \textit{(c)} comes from Theorem~\ref{th: th_heat_was_Da} and \textit{(d)} comes from Theorem~\ref{th: cor D_a = geo}.

\subsection{Efficient Computation of Dyadic Scales of the Diffusion Operator}\label{sec:methods:computation}

The most computationally intensive step of Diffusion EMD requires computing dyadic scales of the diffusion operator $\mP$ times $\vmu$ to estimate the density of $\vmu^{(t)}$ at multiple scales which we will call $\vb$. After this embedding, Diffusion EMD between two embeddings $\vb_i, \vb_j$ is computed as $| \vb_i - \vb_j |$, i.e. the $L^1$ norm of the difference. Computing the embedding $\vb$ naively by first powering $\mP$ then right multiplying $\vmu$, may take up to $2^K$ matrix multiplications which is infeasible for even moderately sized graphs. We assume two properties of $\mP$ that makes this computation efficient in practice. First, that $\mP$ is sparse with order $\tilde{O}(n)$ non-zero entries. This applies when thresholding $\mK_\epsilon$ or when using a $K$-nearest neighbors graph to approximate the manifold~\cite{van_der_maaten_visualizing_2008, moon_visualizing_2019}. Second, that $\mP^t$ is low rank for large powers of $t$.

While there are many ways to approximate dyadic scales of $\mP$, we choose from two methods depending on the number of distributions $m$ compared to the number of points in the graph $n$. When $m \ll n$, we use a method based on Chebyshev approximation of polynomials of the eigenspectrum of $\mP$ as shown in Alg.~\ref{alg:cheb_embedding}. This method is efficient for sparse $\mP$ and a small number of distributions~\cite{shuman_chebyshev_2011}. For more detail on the error incurred by using Chebyshev polynomials we refer the reader to~\citet[Chap. 3]{trefethen_approximation_2013}. In practice, this requires for the approximating polynomial of $J$ terms, computation of $mJ$ (sparse) matrix vector multiplications for a worst case time complexity of $O(J m n^3)$, but in practice is $\tilde{O}(J m n)$ where $J$ is a small constant (see Fig.~\ref{fig:ablation}(e)). However, while asymptotically efficient, when $m \gg n$ in practice this can be inefficient as it requires many multiplications of the form $\mP \vmu$. 

\begin{algorithm}[tb]
    \caption{Chebyshev embedding }
    \label{alg:cheb_embedding}
\begin{algorithmic}
    \STATE {\bfseries Input:} $n \times n$ graph kernel $\mK$, $n \times m$ distributions $\vmu$, maximum scale $K$, and snowflake constant $\alpha$.
    \STATE {\bfseries Output:} $m \times (K + 1) n$ distribution embeddings $\vb$
    \STATE $\mQ \leftarrow Diag(\sum_i \mK_{ij})$
    \STATE $\mK^{norm} \leftarrow \mQ^{-1} \mK \mQ^{-1}$
    \STATE $\mD \leftarrow Diag(\sum_i \mK^{norm}_{ij})$
    \STATE $\mM \leftarrow \mD^{-1/2}\mK^{norm}\mD^{-1/2}$
    \STATE $\mU \mSigma \mU^T = \mM$; $\quad \mU$ orthogonal, $\mSigma$ Diagonal
    \STATE $\vmu^{(2^0)} \leftarrow \mP \vmu \leftarrow \mD^{-1/2} \mM \mD^{1/2} \vmu$
    \FOR{$k=1$ {\bfseries to} $K$}
        \STATE $\vmu^{(2^k)} \leftarrow \mP^{2^k}\vmu \leftarrow \mD^{-1/2} \mU (\mSigma)^{2^k} \mU^T \mD^{1/2} \vmu$
        \STATE $\vb_{k-1} \leftarrow 2^{(K-k-1)\alpha}(\vmu^{(2^k)} - \vmu^{(2^{k-1})})$
    \ENDFOR
    \STATE $\vb_{K} \leftarrow \vmu^{(2^K)}$
    \STATE $\vb \leftarrow [\vb_0, \vb_1, \ldots, \vb_K]$
\end{algorithmic}
\end{algorithm}

In the case where $m \gg n$, approximating powers of $\mP$ and applying these to the $n \times m$ collection of distributions $\vmu$ once is faster. This method is also useful when the full set of distributions is not known and can be applied to new distributions one at a time in a data streaming model. A naive approach for computing $\mP^{2^K}$ would require $K$ dense $n \times n$ matrix multiplications. However, as noted in \citet{coifman_diffusion_2006-1}, for higher powers of $\mP$, we can use a much smaller basis. We use an algorithm based on interpolative decomposition~\cite{liberty_randomized_2007, bermanis_multiscale_2013} to reduce the size of the basis, and subsequently the computation time, at higher scales. In this algorithm we first determine the approximate rank of $\mP^{2^k}$ using an estimate of the density of the eigenspectrum of $\mP$ as in~\citet{dong_network_2019}. We then alternate steps of downsampling the basis to the specified rank of $\mP^{2^k}$ with (randomized) interpolative decomposition with steps of powering $\mP$ on these bases. Informally, the interpolative decomposition selects a representative set of points that approximate the basis well. In the worst case this algorithm can take $\tilde{O}(m n^3)$ time to compute the diffusion density estimates, nevertheless with sparse $\mP$ with a rapidly decaying spectrum, this algorithm is $\tilde{O}(m n)$ in practice. For more details see Alg.~\ref{alg:id_embedding} and Sec.~\ref{sec:supp:algorithm} of the Appendix.

\subsection{Subsampling Density Estimates}\label{sec:methods:subsampling}

The density estimates created for each distribution are both large and redundant with each distribution represented by a vector of $(K+1) \times n$ densities. However, as noted in the previous section, $\mP^{2^k}$ can be represented on a smaller basis, especially for larger scales. Intuitively, the long time diffusions of nodes that are close to each other are extremely similar. Interpolative decomposition~\cite{liberty_randomized_2007, bermanis_multiscale_2013} allows us to pick a set of points to center our diffusions kernels such that they approximately cover the graph up to some threshold on the rank. In contrast, in other multiscale EMD methods the bin centers or clusters are determined randomly, making it difficult to select the number of centers necessary. Furthermore, the relative number of centers at every scale is fixed, for example, Quadtree or Haar wavelet based methods~\cite{indyk_fast_2003, gavish_multiscale_2010} use $2^{d^k}$ centers at every scale, and a clustering based method~\cite{le_tree-sliced_2019} selects $C^k$ clusters at every scale for some constant $C$. Conversely, in Diffusion EMD, by analyzing the behavior of $\mP^{2^k}$, we intelligently select the number of centers needed at each scale based on the approximate rank of $\mP^{2^k}$ up to some tolerance at each scale. This does away with the necessity of a fixed ratio of bins at every scale, allowing adaptation depending on the structure of the manifold and can drastically reduce the size representations (see Fig.~\ref{fig:ablation}(c)). For the Chebyshev polynomials method, this subsampling is done post computation of diffusion scales, and for the method based on approximating $\mP^{2^k}$ directly the subsampling happens during computation. To this point, we have described a method to embed distributions on a graph into a set of density estimates whose size depends on the data, and the spectrum decay of $\mP$. We will now explore how to use these estimates for exploring the Diffusion EMD metric between distributions.

\subsection{Diffusion EMD Based Embeddings of Samples}\label{sec:methods:embeddings}

Our main motivation for a fast EMD computed on related datasets is to examine the space of the samples or datasets themselves, i.e., the higher level manifold of distributions. In terms of the clinical data, on which we show this method, this would be the relationship between patients themselves, as determined by the EMD between their respective single-cell peripheral blood datasets.  Essentially, we create a kernel matrix $\mK_{\mathcal{X}}$ and diffusion operator $\mP_{\mathcal{X}}$ between datasets where the samples are nodes on the associated graph. This diffusion operator $\mP_{\mathcal{X}}$ can be embedded using diffusion maps~\cite{coifman_diffusion_2006} or visualized with a method like PHATE \cite{moon_manifold_2018} that collects the information into two dimensions as shown in Sec.~\ref{sec:results}. We note this higher level graph can be a sparse KNN graph, particularly given that a diffusion operator on the graph can allow for global connections to be reformed via $t$-step path probabilities.  

Multiscale formulations of EMD as in \eqref{eq:diffusion_emd} are especially effective when searching for nearest neighbor distributions under the Wasserstein metric~\cite{indyk_fast_2003, backurs_scalable_2020} as this distance forms a normed space, i.e., a space where the metric is induced by the $L_1$ norm of the distribution vectors and their differences.  Data structures such as kd-trees, ball trees, locality sensitive hashing, are able to take advantage of such normed spaces for  sub-linear neighbor queries. This is in contrast to network-flow or Sinkhorn type approximations that require a scan through all datapoints for each nearest neighbor query as this metric is not derived from a norm. 


\subsection{Gradients of the Earth Mover's Distance}\label{sec:methods:gradients}

One of the hindrances in the use of optimal transport-based distances has been the fact that it cannot be easily incorporated into deep learning frameworks. 
Gradients with respect to the EMD are usually found using a trained Lipschitz discriminator network as in Wasserstein-GANs~\cite{arjovsky_wasserstein_2017}, which requires unstable adversarial training, or by taking derivatives through a small number of iterations of the Sinkhorn algorithm~\cite{frogner_learning_2015,BTSSPP15, genevay_learning_2018, liu_learning_2020}, which scales with $O(n^2)$ in the number of points. Tree based methods that are linear in the number of points do not admit useful gradients due to their hard binning over space, giving a gradient of zero norm almost everywhere. 

We note that, given the data diffusion operator, the computation of Diffusion EMD is differentiable and, unlike Tree-based EMD, has smooth bins and therefore a non-zero gradient norm near the data (as visible in Fig.~\ref{fig:1d}). Further, computation of the gradient only requires powers of the diffusion operator multiplied by the indicator vector describing the distribution on the graph. In fact, as mentioned in the supplementary material (Sec.~\ref{subsec:gradient}) for each $v \in V$, the gradient of the Diffusion EMD $\partial W_{\alpha,K}(X_i, X_j)/\partial v$ depends mainly on the gradients $\partial \mP_{\epsilon}^{2^k}/\partial v$ for $0 \leq K$ which can be expressed in terms of the gradient of the Gaussian kernel $ \partial \mK_{\epsilon}/\partial v$. This last quantity is easy to compute. In Sec.~\ref{subsec:gradient} of the Appendix, we give an exact process on computing the gradient of the Diffusion EMD.



\section{Results}\label{sec:results}

In this section, we first evaluate the Diffusion EMD on two manifolds where the ground truth EMD with a geodesic ground distance is known, a swiss roll dataset and spherical MNIST~\citep{cohen_convolutional_2017}. On these datasets where we have access to the ground truth geodesic distance we show that Diffusion EMD is both faster and closer to the ground truth than comparable methods. Then, we show an application to a large single cell dataset of COVID-19 patients where the underlying metric between cells is thought to be a manifold~\cite{moon_manifold_2018,kuchroo_multiscale_2020}. We show that the manifold of patients based on Diffusion EMD by capturing the graph structure, better captures the disease state of the patients.

\paragraph{Experimental Setup.} 
We consider four baseline methods for approximating EMD: $\text{QuadTree}(D)$~\cite{backurs_scalable_2020} which partitions the dataspace in half in each dimension up to some specified depth $D$, $\text{ClusterTree}(C, D)$~\cite{le_tree-sliced_2019} which recursively clusters the data with $C$ clusters up to depth $D$ using the distance between clusters to weight the tree edges, and the convolutional Sinkhorn distance~\cite{solomon_convolutional_2015} with the same graph as used in Diffusion EMD. QuadTree and ClusterTree are fast to compute. However, because they operate in the ambient space, they do not represent the geodesic distances on the manifold in an accurate way. The convolutional Sinkhorn method represents the manifold well but is significantly slower even when using a single iteration. For more details on related work and the experimental setup see Sections \ref{sec:supp:related} and \ref{sec:supp:experiments} of the Appendix respectively. 

\begin{figure}[ht]
\begin{center}
\centerline{\includegraphics[width=\columnwidth]{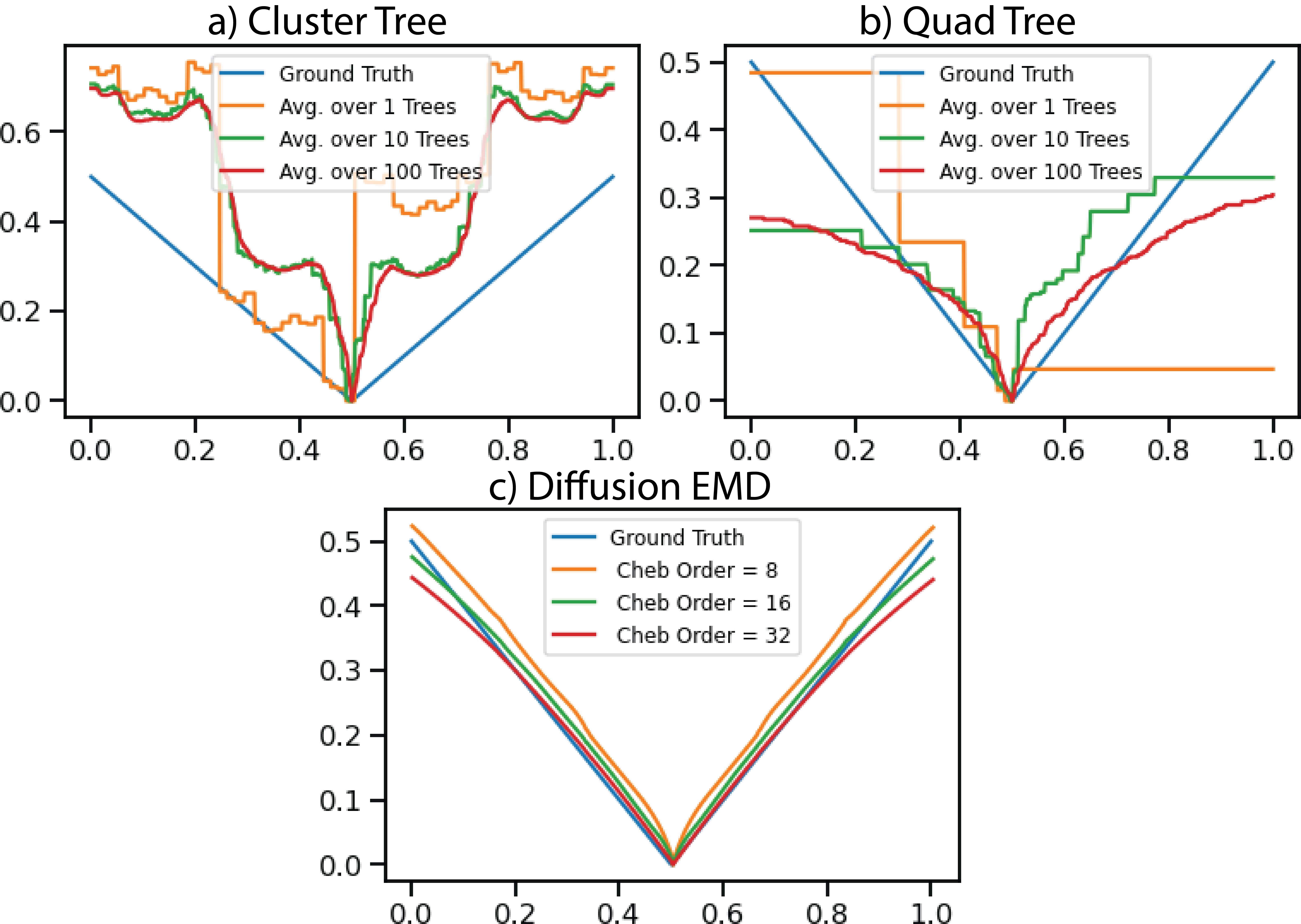}}
\vskip -0.1in
\caption{Wasserstein distance of indicator distributions $\bm{1}_x, x \in [0,1]$ from $\bm{1}_{0.5}$ computed using linear EMD methods $L^2$ distance: (a) ClusterTree (b) QuadTree and (c) Diffusion EMD.}
\label{fig:1d}
\end{center}
\vskip -0.2in
\end{figure}
\paragraph{1D data.} We first illustrate the advantages of using Diffusion EMD over tree-based methods (ClusterTree and QuadTree) on a line graph with 500 points spaced in the interval $[0,1]$. In Fig.~\ref{fig:1d} we depict the Wasserrstein distance between an indicator function at each of these 500 points ($\mathbf{1}_{x}$) and an indicator function at $x=0.5$. In the case of indicator functions the Wasserstein distance is exactly the ground distance. We show three approximations of each method, varying the number of trees and the Chebyshev polynomial order respectively. It is clear that Diffusion EMD achieves a much better approximation of the ground truth primarily due to its use of smooth bins.

\begin{figure}[ht]
\begin{center}
\centerline{\includegraphics[width=\columnwidth]{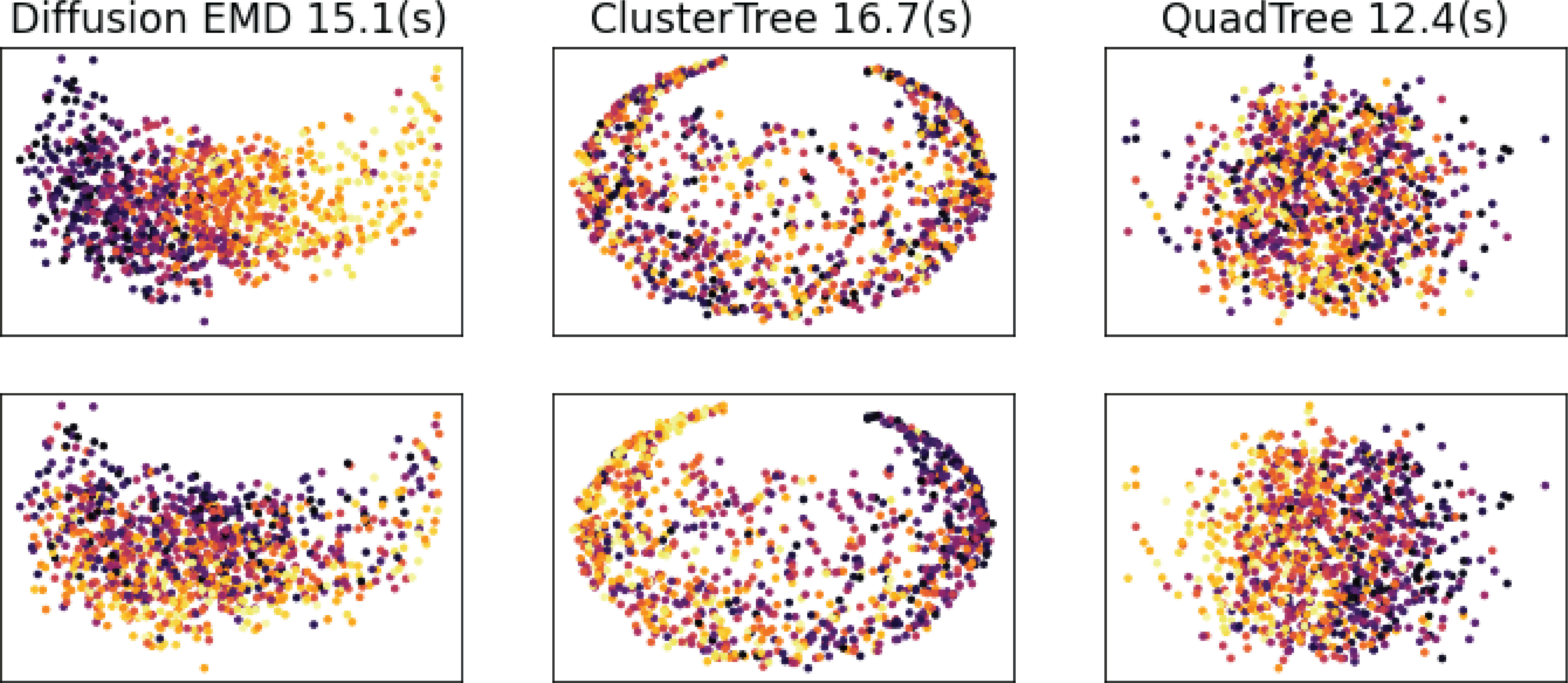}}
\vskip -0.1in
\caption{Swiss roll dataset embeddings of $m=1000$ distributions with $n=10,000$ total points rotated into 10D colored by ground truth 2D sheet axes. Diffusion EMD recreates the manifold better in similar time.}
\label{fig:swissroll_embed}
\end{center}
\vskip -0.2in
\end{figure}
\begin{figure}[ht]
\begin{center}
\centerline{\includegraphics[width=\columnwidth]{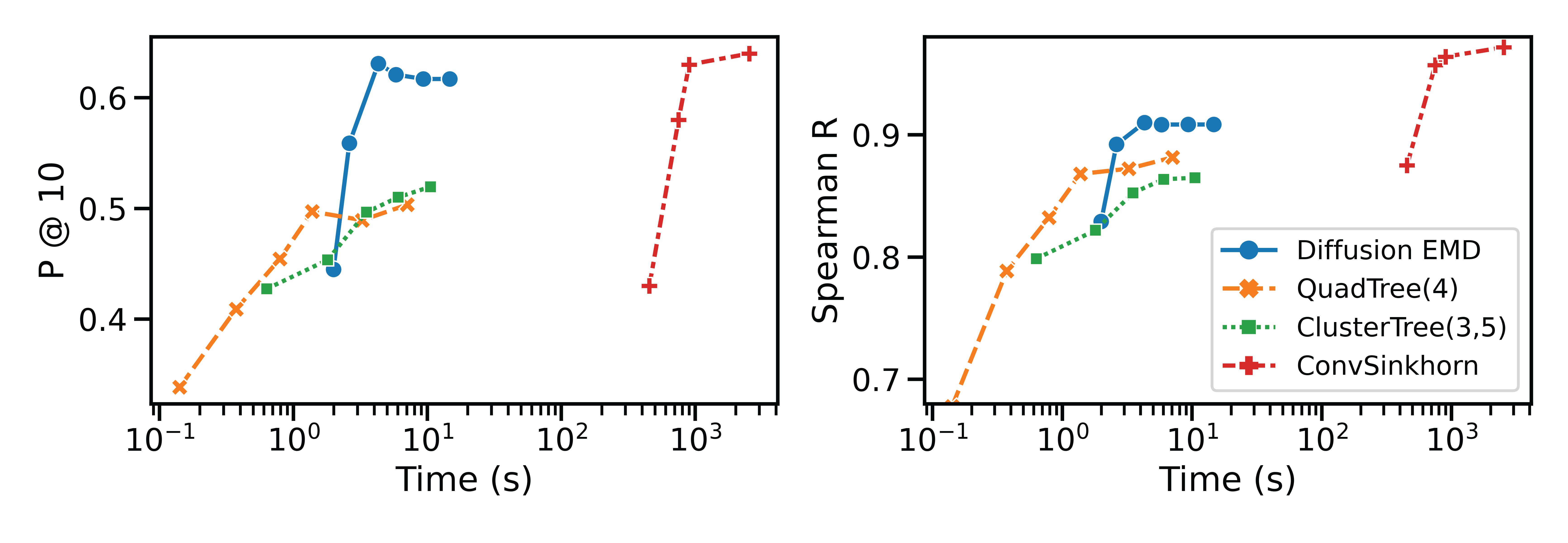}}
\vskip -0.1in
\caption{Accuracy of methods measured via P@10 (left) and Spearman coefficient (right), against their (log scaled) computation time in seconds on the swiss roll dataset. Variations of methods are over Chebyshev approximation order for Diffusion EMD, \# of trees for tree methods, and number of iterations for conv. Sinkhorn. Diffusion EMD is more accurate than tree methods and orders of magnitude faster than conv. Sinkhorn even with a single iteration.}
\label{fig:swissroll}
\end{center}
\vskip -0.2in
\end{figure}
\begin{figure}[ht]
\begin{center}
\centerline{\includegraphics[width=\columnwidth]{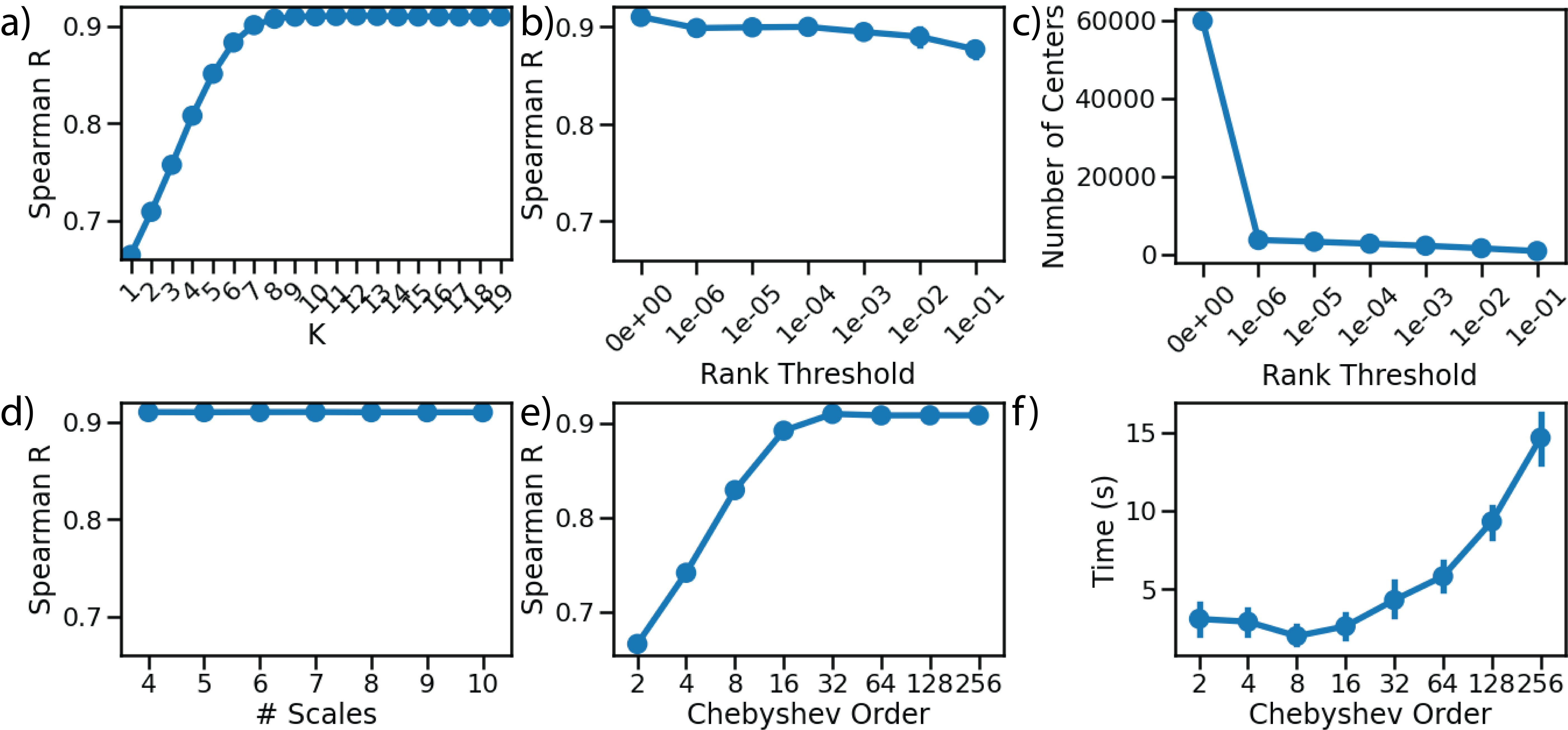}}
\vskip -0.1in
\caption{Ablation study of major parameters for Chebyshev polynomial approximation on the swiss roll dataset. Mean and std.\ over 10 runs over (a) values of the maximum scale $K$, (b) the rank threshold in interpolative decomposition, (c) the total number of centers in the $L^1$ representation which drops with decomposition, (d-e) performance against the \# of scales, and the order of the polynomial, both are very stable after a certain point, and (f) time vs. the Chebyshev order. }
\label{fig:ablation}
\end{center}
\vskip -0.2in
\end{figure}
\paragraph{Swiss roll data.} The next application we explore is to a dataset where we have distributions on a manifold for which the geodesic distance is easily computable. In this way we can compare to the ground truth EMD between distributions. We generate $m=100$ Gaussians on the swiss roll with 100 points each for a total of $n=10,000$ points on the graph. We compare each method on two metrics, the 10-nearest neighbor accuracy measured (P@10) where a P@10 of 1 means that the 10-nearest neighbors are the same as the ground truth. We compare the rankings of nearest neighbors over the entire dataset using the Spearman-$\rho$ correlation coefficient, which measures the similarity of nearest neighbor rankings. This coefficient ranges between -1 for inversely ranked lists and 1 for the same ranks. This measures rankings over the entire dataset equally rather than only considering the nearest neighbors. Visually, we show embeddings of the swiss roll in Fig.~\ref{fig:swissroll_embed}, where the 2D manifold between distributions is best captured by Diffusion EMD given a similar amount of time. 

In Fig.~\ref{fig:swissroll}, we investigate the time vs. accuracy tradeoff of a number of fast EMD methods on the swiss roll. We compare against the ground truth EMD which is calculated with the exact EMD on the ``unrolled'' swiss roll in 2D. We find that Diffusion EMD is more accurate than tree methods for a given amount of time and is much faster than the convolutional Sinkhorn method and only slightly less accurate. To generate multiple models for each dataset we vary the number of trees for tree methods, the Chebyshev order for Diffusion EMD, and the number of iterations for convolutional Sinkhorn. We search over and fix other parameters using a grid search as detailed in Sec.~\ref{sec:supp:experiments} of the Appendix.

In Fig.~\ref{fig:ablation}, we vary parameters of the Chebyshev polynomial algorithm of Diffusion EMD. Regarding performance, we find Diffusion EMD is stable to the number of scales chosen after a certain minimum maximum scale $K$, Chebyshev polynomial order, and the number of scales used. By performing interpolative decomposition with a specified rank threshold on $\mP^{2^k}$ we can substantially reduce the embedding size at a small cost to performance Fig.~\ref{fig:ablation}(b,c).

\paragraph{Spherical MNIST.}
\vskip -0.2in
\begin{table}[tbh]
\caption{Classification accuracy, P@10, Spearman $\rho$ and runtime (in minutes) on 70,000 distributions from Spherical MNIST.}
\label{tables:spherical_mnist}
\begin{center}
\begin{small}
\begin{sc}
\adjustbox{width=\linewidth}{
\begin{tabular}{lrrrr}
    \toprule
 & Accuracy & P@10 & Spearman $\rho$ & Time \\ 
    \midrule
        Diff. EMD    & \textbf{95.94}    & \textbf{0.611}    & \textbf{0.673}  & 34m            \\ 
        Cluster  & 91.91             & 0.393             & 0.484           & 30m            \\ 
        Quad     & 79.56             & 0.294             & 0.335           & \textbf{16m}   \\ 
    \bottomrule
\end{tabular}

}\end{sc}
\end{small}
\end{center}
\vskip -0.1in
\end{table}

Next, we use the Spherical MNIST dataset to demonstrate the efficacy of the interpolative decomposition based approximation to Diffusion EMD, as here $m \gg n$. Each image is treated as a distribution (of pixel intensities) over the sphere. To evaluate the fidelity of each embedding, we evaluate the 1-NN classification on the embedding vectors in addition to P@10 and Spearman coefficient in Tab.~\ref{tables:spherical_mnist}. Diffusion EMD creates embeddings that better approximate true EMD over the sphere than tree methods in a similar amount of time, which in this case also gives better classification accuracy. 

\paragraph{Single cell COVID-19 patient data.}
\begin{figure}[ht]
\begin{center}
\centerline{\includegraphics[width=\columnwidth]{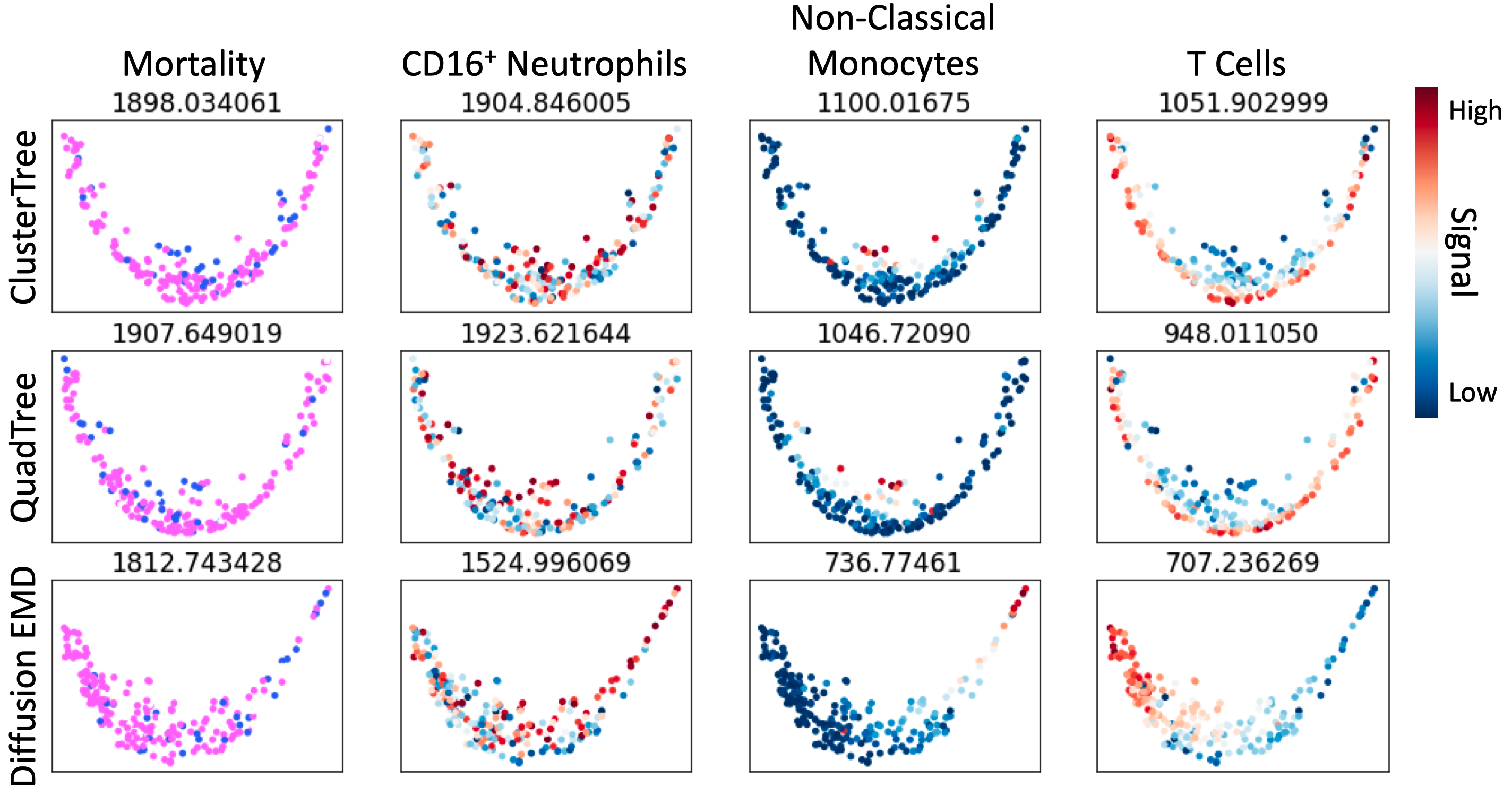}}
\caption{Embedding of 210 patients through different manifold constructions.  Visualizing patient eventual mortality and cell types predictive of disease outcome on each manifold. Laplacian smoothness reported on each signal for each manifold.}
\label{fig:covid}
\end{center}
\vskip -0.2in
\end{figure}

The COVID-19 pandemic has driven biologists to generate vast amounts of cellular data on hospitalized patients suffering from severe disease. A major question in clinicians minds is determining a priori which patients may be at risk for worse outcomes, including requiring increased ventilatory support and increased risk of mortality.  Certain cell types found in the blood, such as CD16$^{+}$ Neutrophils, T cells and non-classical monocytes, have been associated with and predictive of mortality outcome. Ideally, a manifold of patients would find these cellular populations to occupy a region of high mortality for CD16$^{+}$ Neutrophils and non-classical monocytes and low mortality for T cells.  In order to construct a manifold of patients suffering from COVID-19, we analyzed 210 blood samples from 168 patients infected with SARS-CoV-2 measured on a myeloid-specific flow cytometry panel, an expanded iteration of a previously published dataset~\cite{Lucas2020}. We embedded 22 million cells from these patients into a common combined cell-cell graph with 27,000 nodes as defined in Sec.~\ref{sec:datagraphs}. We then computed Diffusion EMD and other methods on these datasets. Diffusion EMD is computed by using indicator vectors for each patient converted to density estimates as in \eqref{eq: diffusion_measures}.

On an informative embedding of patients, similar patients, with similar features (such as mortality) would localize on the manifold and thus the important features should be smooth over the manifold. Furthermore, cell types which are correlated with outcome either positively or negatively should also be smooth and either correlated or anticorrelated with outcome. To quantify this, we compute the smoothness with respect to the patient manifold by using a {\em Laplacian quadratic form} with respect to the 10-NN graph between patients. Convolutional Sinkhorn does not scale to this data, so we compare a patient manifold created with Diffusion EMD to ones created with QuadTree and ClusterTree. Diffusion EMD is able to use the manifold of cells where QuadTree and ClusterTree are built in the ambient space. In Fig.~\ref{fig:covid} we visualize relevant signals over the patients using PHATE~\cite{moon_visualizing_2019} overlayed with the quadratic smoothness of the signal over the graph. While the mortality signal appeared enriched in the right branches of both the Diffusion EMD and QuadTree manifolds, it did not localize as well on the ClusterTree manifold. Both CD16$^{+}$ Neutrophils and non-classical monocytes appeared smoother over the Diffusion EMD manifold than the comparison manifolds. Since both cell types are associated with mortality, it was interesting to see them both enriched in high mortality region of the Diffusion EMD manifold but not the others. Finally, T cells, which are negatively correlated with mortality appeared smoothly enriched in the Diffusion EMD manifold in a region with no mortality. In QuadTree and ClusterTree constructions, T cells appeared enriched throughout the manifold, no localizing smoothly to a region with low mortality.  These experiments show that the patient manifold constructed with Diffusion EMD is more informative, smoothly localizing key signals, such as patient outcome and predictive cell types. For a more details see Sec.~\ref{sec:supp:experiments} of the Appendix.

\section{Conclusion}

In this work we have introduced Diffusion EMD, a multiscale distance that uses heat kernel diffusions to approximate the earth mover's distance over a data manifold. We showed how Diffusion EMD can efficiently embed many samples on a graph into a manifold of samples in $\tilde{O}(mn)$ time more accurately than similarly efficient methods. This is useful in the biomedical domain as we show how to embed COVID-19 patient samples into a higher level patient manifold that more accurately represents the disease structure between patients. Finally, we also show how to compute gradients with respect to Diffusion EMD, which opens the possibility of gradients of the earth mover's distance that scale linearly with the dataset size.

\section*{Acknowledgements}
This research was partially funded by IVADO PhD Excellence Scholarship [\emph{A.N.}]; IVADO Professor startup \& operational funds, IVADO Fundamental Research Proj.\ grant PRF-2019-3583139727, Canada CIFAR AI Chair [\emph{G.W.}]; Chan-Zuckerberg Initiative grants 182702 \& CZF2019-002440 [\emph{S.K.}]; and NIH grants R01GM135929 \& R01GM130847 [\emph{G.W., S.K.}]. The content provided here is solely the responsibility of the authors and does not necessarily represent the official views of the funding agencies.

\bibliography{clean}
\bibliographystyle{icml2021}
\clearpage 

\appendix

\onecolumn

\section*{Supplemental Material}
We first analyze the theoretical framework of Diffusion EMD in Appendix~\ref{sec:supp:theory}. Next we discuss related worki n Appendix~\ref{sec:supp:related}, with a particular focus on multiscale methods for EMD. In Appendix~\ref{sec:supp:algorithm} we provide further detail on the two algorithms for computing Diffusion EMD, the first based on Chebyshev approximation, and the second based on directly powering the diffusion operator while reducing the basis. We discuss gradients of Diffusion EMD in section \ref{subsec:gradient}. Finally we provide further experimental details in Appendix~\ref{sec:supp:experiments}. 

\section{General framework and proofs}\label{sec:supp:theory}

\subsection{General framework}

We now recall some useful results from \citet{leeb_holderlipschitz_2016}. We will only present an overview, for a rigorous exposition, we suggest \citet{leeb_topics_2015}, \citet{leeb_holderlipschitz_2016} and \citet{grigoryan_heat_2015}.

We let $\mathcal{Z}$ be a sigma-finite measure space of dimension $n$, $d(\cdot,\cdot)$ its intrinsic distance and $\xi$ its associated sigma-finite measure. In order to evaluate the EMD between two measures on $\mathcal{Z}$, one would need to know $d(\cdot,\cdot)$. Either directly, by using equation \ref{eq:primal}, or implicitly, to define the space of Lipschitz functions with respect to $d(\cdot,\cdot)$ in equation \ref{eq:dual}. One of the contributions of this paper is to define a kernel based metric $D_\alpha(\cdot,\cdot)$, where the kernel depends on $\alpha\in(0,1)$, and show that the metrics $D_\alpha(\cdot,\cdot)$ and $d(\cdot,\cdot)$ are closely related. Next, the objective is to use $D_\alpha(\cdot,\cdot)$ as the ground distance to compute the EMD in its dual form. That is a norm between two distributions acting on the space of Lipschitz functions with respect to $D_\alpha(\cdot,\cdot)$. To do so, the authors define $\Lambda_\alpha$ the space of functions that are Lipschitz with respect to $D_\alpha(\cdot,\cdot)$, and its dual $\Lambda^*_\alpha$; the space of measures acting on $f\in\Lambda_\alpha$. Further, in order to use the Kantorovich–Rubinstein theorem, they show that the space $\mathcal{Z}$ is separable with respect to the metric $D_\alpha(\cdot,\cdot)$. Thus, the norm of the dual space $\normdual{\cdot}$ can be used to compute the EMD with $D_\alpha(\cdot,\cdot)$ as the ground distance. Lastly, they define two norms $\normdual{\cdot}^{(1)}$ and $\normdual{\cdot}^{(2)}$ that are equivalent to $\normdual{\cdot}$ on $\Lambda^*$. In practice, these norms are much faster to compute. 

The metric $D_\alpha(\cdot,\cdot)$ is defined using a family of kernels $\{a_t(\cdot,\cdot)\}_{t\in\mathbb{R}^+}$ on $\mathcal{Z}$. For each kernel, we define an operator $\mA_t$ as $(\mA_tf)(x) = \int_\mathcal{Z} a_t(x,y)f(y)d\xi(y)$. These kernels must respect some properties:
\begin{itemize}
    \item The semigroup property: for all $s,t>0, \mA_t \mA_s = \mA_{t+s}$;
    \item The conservation property: $\int_\mathcal{Z} a_t(x,y)d\xi(y) = 1$;
    \item The integrability property: there exists $C>0$ such that $\int_\mathcal{Z} \abs{a_t(x,y)}d\xi(y) < C$, for all $t>0$ and $x\in\mathcal{Z}$.
\end{itemize}
Considering only the dyadic times, that is $t = 2^{-k}$, we define the kernel $p_k(\cdot,\cdot) := a_{2^{-k}}(\cdot,\cdot)$ and the operator $\mP_k = \mA_{2^{-k}}$, for all $k\in\mathbb{N}$. By leveraging the local geometric information gathered from the $L^1$ distance between two measures
\begin{equation*}
    D_k(x,y) := \norm{p_k(x,\cdot)-p_k(y,\cdot)}_1,
\end{equation*} 
the authors define the following multiscale metric
\begin{equation*}
    D_\alpha (x,y) := \sum_{k\geq 0}2^{-k\alpha}D_k(x,y).
\end{equation*}
To interpret $D_k(\cdot,\cdot)$ in an intuitive way, consider the case where $a_t(\cdot,\cdot)$ defines a random walk on $\mathcal{Z}$. As a consequence, $a_t(x,B(y,r))$ is the probability to move from $x$ to a point in $B(y,r)$ in $t$ steps. Moreover, for any $x\in\mathcal{Z}$, $a_t(x,\cdot)$ defines a distribution, therefore $D_k(x,y)$ is the $L^1$ distance between two distributions induced by the points $x$ and $y$. Since these distributions depend on the number of steps $t$, it is clever to consider a distance that includes many scales, just like $D_\alpha(\cdot,\cdot)$. Another property needs to be verified by the kernel $a_t(\cdot,\cdot)$, this property depends on the distance $D_\alpha(\cdot,\cdot)$. Namely, the geometric property: there exist $C>0$ and $\alpha\in(0,1)$ such that for all $k\in\mathbb{N}$ and $x\in\mathcal{Z}$
\begin{equation*}
    \int_\mathcal{Z}\abs{p_k(x,y)}D_\alpha(x,y)d\xi(y) \leq C 2^{-k\alpha}.
\end{equation*}

We need to add three stronger regularity conditions on the kernel $a_t(\cdot,\cdot)$, for $D_\alpha(\cdot,\cdot)$ to be closely related to the intrinsic distance $d(\cdot,\cdot)$:
\begin{enumerate}
    \item An upper bound on the kernel: there exist a non-negative, monotonic decreasing function $\phi : \mathbb{R}^+ \to \mathbb{R}$ and $\beta>0$ such that for any $\gamma<\beta$, the function verifies 
    \begin{equation*}
        \int_0^\infty \tau^{n+\gamma-1}\phi(\tau)d\tau
    \end{equation*}
    and
    \begin{equation*}
        \abs{a_t(x,y)}\leq t^{-\frac{n}{\beta}}\phi\left( \frac{d(x,y)}{t^{1/\beta}} \right).
    \end{equation*}
    \item Hölder continuity estimate: there exist $\Theta>0$ sufficiently small, such that, for all $t\in(0,1]$ and all $x,y\in\mathcal{Z}$, the distance verifies $d(x,y)\leq t^{1/\beta}$ and for all $u\in\mathcal{Z}$ the difference between kernels is bounded
    \begin{equation*}
        \abs{a_t(x,u)-a_t(y,u)}\leq t^{-\frac{n}{\beta}} \left(\frac{d(x,y)}{t^{1/\beta}}\right)^\Theta \!\!\phi\left( \frac{d(x,y)}{t^{1/\beta}} \right).
    \end{equation*}
    \item A local lower bound: there exist a monotonic decreasing function $\psi:\mathbb{R}^+\to\mathbb{R}$ and $R>0$, such that, for all $t\in(0,1]$ and all $x,y$ where $d(x,y)<R$, we have
    \begin{equation*}
        \abs{a_t(x,y)}\geq t^{-\frac{n}{\beta}}\psi\left( \frac{d(x,y)}{t^{1/\beta}} \right).
    \end{equation*}
\end{enumerate}
It is shown that the heat kernel on a closed Riemannian manifold respects all these conditions  \citep[][see Sec. 3.3]{leeb_holderlipschitz_2016}.

\begin{definition}
A distance $d(\cdot,\cdot)^b$ is a snowflake of a distance $d(\cdot,\cdot)$ if $b\in(0,1)$. Moreover, the Hölder space is the space of functions that are Lipschitz continuous w.r.t.\ a snowflake distance, hence the terminology used in \citet{leeb_holderlipschitz_2016}. 
\end{definition}

We now state an important theorem from \citet[Thm. 2]{leeb_holderlipschitz_2016}.
\begin{theorem}
\label{th: D_alpha = rho}
Consider a sigma-finite measure space $\mathcal{Z}$ of dimension $n$ with metric $d(\cdot,\cdot)$ and a measure $\xi$ such that $\xi(B(x,r))\simeq r^n$. If the family of kernels $\{a_t(\cdot,\cdot)\}_{t\in\mathbb{R}^+}$ respect the condition 1,2 and 3, then, for $0<\alpha<\min(1,\Theta/\beta)$, the distance $D_\alpha(\cdot,\cdot)$ is equivalent to the thresholded snowflake distance $\min[1,d(\cdot,\cdot)^{\alpha\beta}]$.
\end{theorem}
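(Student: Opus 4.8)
The plan is to establish the two one-sided estimates $D_\alpha(x,y)\lesssim\min[1,d(x,y)^{\alpha\beta}]$ and $D_\alpha(x,y)\gtrsim\min[1,d(x,y)^{\alpha\beta}]$ separately, in each case by decomposing the defining series $D_\alpha(x,y)=\sum_{k\ge 0}2^{-k\alpha}D_k(x,y)$, $D_k(x,y)=\norm{p_k(x,\cdot)-p_k(y,\cdot)}_1$, at the critical dyadic scale $k^\ast$ for which $2^{-k^\ast/\beta}\simeq d(x,y)$: below $k^\ast$ the diffusion radius $t^{1/\beta}$ dominates $d(x,y)$ and the H\"older estimate (condition~2) applies, while above $k^\ast$ the two kernels live essentially on disjoint balls. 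A preliminary remark disposes of the truncation at $1$: conservation together with nonnegativity of the kernels gives $D_k(x,y)\le\norm{p_k(x,\cdot)}_1+\norm{p_k(y,\cdot)}_1\le 2$, hence $D_\alpha(x,y)\le 2\sum_{k\ge 0}2^{-k\alpha}<\infty$ for every $\alpha>0$; so $D_\alpha\lesssim 1$ unconditionally, and it suffices to treat $d(x,y)\le 1$ in the upper bound and $d(x,y)$ in a bounded range in the lower bound. I would also record once and for all the layer-cake estimate $\int_{\mathcal{Z}}\phi(d(x,u)/s)\,d\xi(u)\simeq s^n$ for every $s>0$, obtained from $\xi(B(x,r))\simeq r^n$ together with the integrability $\int_0^\infty\tau^{n+\gamma-1}\phi(\tau)\,d\tau<\infty$ (take $\gamma$ near $0$).

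For the upper bound, assume $d(x,y)<1$ and let $k^\ast$ be the least integer with $2^{-k^\ast/\beta}\le d(x,y)$, so $2^{-k^\ast}\simeq d(x,y)^\beta$. For $k<k^\ast$ one has $d(x,y)<2^{-k/\beta}=t^{1/\beta}$, so condition~2 applies; integrating it in $u$ (with $\phi$ evaluated at $d(x,u)/t^{1/\beta}$, as in the source) and using the layer-cake estimate gives $D_k(x,y)\lesssim(d(x,y)/t^{1/\beta})^\Theta=d(x,y)^\Theta 2^{k\Theta/\beta}$. Then $\sum_{k<k^\ast}2^{-k\alpha}D_k(x,y)\lesssim d(x,y)^\Theta\sum_{k<k^\ast}2^{k(\Theta/\beta-\alpha)}$, and since $\alpha<\Theta/\beta$ this geometric sum is controlled by its largest term $2^{k^\ast(\Theta/\beta-\alpha)}\simeq d(x,y)^{-\Theta+\alpha\beta}$, so that piece is $\lesssim d(x,y)^{\alpha\beta}$. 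For $k\ge k^\ast$ I simply use $D_k\le 2$, giving $\sum_{k\ge k^\ast}2^{-k\alpha}D_k(x,y)\lesssim 2^{-k^\ast\alpha}\simeq d(x,y)^{\alpha\beta}$. Adding the two pieces yields $D_\alpha(x,y)\lesssim d(x,y)^{\alpha\beta}=\min[1,d(x,y)^{\alpha\beta}]$.

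For the lower bound it is enough to produce a single scale $k^\ast$ with $D_{k^\ast}(x,y)\gtrsim 1$ and $2^{-k^\ast\alpha}\simeq\min[1,d(x,y)^{\alpha\beta}]$, since $D_\alpha(x,y)\ge 2^{-k^\ast\alpha}D_{k^\ast}(x,y)$. I would fix a small $\delta$ with $\delta<R$ and then a large constant $A$, chosen at the end so that $\phi(A-\delta)$ is small compared to $\psi(\delta)$; this is possible because $\phi$ is monotone decreasing and integrable, hence $\phi(\tau)\to 0$. Choose $k^\ast$ so that $t^{1/\beta}=2^{-k^\ast/\beta}\simeq d(x,y)/A$ when $d(x,y)\le A$, and $k^\ast=0$ (so $t=1$) when $d(x,y)>A$; in both cases $t\le 1$ and $\delta t^{1/\beta}<R$. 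On $B:=B(x,\delta t^{1/\beta})$, condition~3 with $\xi(B)\simeq(\delta t^{1/\beta})^n$ gives $\int_B p_{k^\ast}(x,u)\,d\xi(u)\ge t^{-n/\beta}\psi(\delta)\xi(B)\gtrsim\psi(\delta)\delta^n$; and for $u\in B$ one has $d(y,u)\ge(A-\delta)t^{1/\beta}$, so condition~1 gives $\int_B p_{k^\ast}(y,u)\,d\xi(u)\le t^{-n/\beta}\phi(A-\delta)\xi(B)\lesssim\phi(A-\delta)\delta^n$, which by the choice of $A$ is at most half of the first quantity. Since $p_{k^\ast}(x,\cdot),p_{k^\ast}(y,\cdot)\ge 0$, it follows that $D_{k^\ast}(x,y)\ge\int_B\abs{p_{k^\ast}(x,u)-p_{k^\ast}(y,u)}\,d\xi(u)\ge\int_B p_{k^\ast}(x,u)\,d\xi(u)-\int_B p_{k^\ast}(y,u)\,d\xi(u)\gtrsim\psi(\delta)\delta^n$, a positive constant; meanwhile $2^{-k^\ast\alpha}\simeq(d(x,y)/A)^{\alpha\beta}\simeq d(x,y)^{\alpha\beta}$ in the first case and $2^{-k^\ast\alpha}=1$ in the second. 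Hence $D_\alpha(x,y)\gtrsim\min[1,d(x,y)^{\alpha\beta}]$, and combining with the upper bound proves the equivalence.

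I expect the lower bound to be the main obstacle: one has to pick the resolution scale and the geometric constants $\delta,A$ in tandem so that the local lower bound manufactures a genuine amount of mass of $p_{k^\ast}(x,\cdot)$ on a small ball around $x$ while the Gaussian-type upper bound forces $p_{k^\ast}(y,\cdot)$ to leave that same ball almost empty, and the choice must be made uniformly over all admissible $d(x,y)$, which requires handling $d(x,y)\to 0$ and the capping at large $d(x,y)$ separately. By comparison the upper bound is a mechanical split-and-sum once the layer-cake bound for $\phi$ is in hand. Of the two size constraints on $\alpha$, the condition $\alpha<\Theta/\beta$ is precisely what makes the near-scale geometric series converge with the correct exponent, while the condition $\alpha<1$ is what is needed elsewhere in the framework (the ``geometric property'', i.e.\ integrability of $\phi$ against the snowflake weight $d^{\alpha\beta}$).
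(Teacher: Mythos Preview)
The paper does not actually prove this theorem: it is stated as Theorem~2 of \citet{leeb_holderlipschitz_2016} and simply cited. So there is no in-paper proof to compare against; your proposal supplies what the paper defers to the reference.

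Your strategy---split the series at the dyadic scale $k^\ast$ with $2^{-k^\ast/\beta}\simeq d(x,y)$, use the H\"older estimate below $k^\ast$ and the trivial $L^1$ bound above it for the upper bound, and manufacture a single ``separated'' scale via the local lower bound for the lower bound---is exactly the standard route and is, as far as I can tell, how Leeb's own proof runs. The identification of $\alpha<\Theta/\beta$ as the condition that makes the near-scale geometric series sum to the right exponent is correct, and your reading of condition~2 with $\phi$ evaluated at $d(x,u)/t^{1/\beta}$ (rather than $d(x,y)/t^{1/\beta}$ as transcribed in this paper) is the right one: without $u$-dependence the integral over $u$ would not be controlled on an infinite-measure space.

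Two small cautions. First, the framework as stated here does not assume $a_t\ge 0$ (note that conservation and integrability are listed as separate hypotheses), so the preliminary bound $D_k\le 2$ should come from the integrability constant $C$, not from conservation plus nonnegativity; and in the lower-bound step the inequality $\int_B|p_{k^\ast}(x,\cdot)-p_{k^\ast}(y,\cdot)|\ge\int_B p_{k^\ast}(x,\cdot)-\int_B p_{k^\ast}(y,\cdot)$ as written tacitly uses $p_{k^\ast}\ge 0$. You can repair this either by invoking nonnegativity (which holds for the heat kernel, the case of interest) or by replacing that line with $\int_B|p_{k^\ast}(x,\cdot)-p_{k^\ast}(y,\cdot)|\ge\big|\int_B p_{k^\ast}(x,\cdot)-\int_B p_{k^\ast}(y,\cdot)\big|$ and then arguing that condition~3 actually lower-bounds $a_t$ (not just $|a_t|$) in Leeb's formulation. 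Second, in the large-distance regime $d(x,y)>A$ you should check that $\delta<R$ (you wrote $\delta t^{1/\beta}<R$, which at $t=1$ is the same thing) so that condition~3 is applicable on $B(x,\delta)$; you have this, but it is worth stating explicitly that $\delta$ is chosen before $A$. These are cosmetic; the argument is sound.
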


\begin{remark}
In our case, because we used the heat kernel on a closed Riemannian manifold $\mathcal{M}$, we had $\Da \simeq d_{\mathcal{M}}(\cdot,\cdot)^{2\alpha}$ (Thm. \ref{th: cor D_a = geo}). This can be justified from \citet[Cor. 2]{leeb_holderlipschitz_2016}. Using the same notation as in the corollary, we define $C:=\max_{x,y} d_{\mathcal{M}}(x,y)^{2\alpha}$ (which is finite due to the assumptions on $\mathcal{M}$), thus we can bound the constant $B$
\begin{equation*}
    B = \frac{B}{d_{\mathcal{M}}(x,y)^{2\alpha}}d_{\mathcal{M}}(x,y)^{2\alpha} \ge \frac{B}{C}d_{\mathcal{M}}(x,y)^{2\alpha}.
\end{equation*}
\end{remark}

The previous theorem closely links the two considered distances. It also motivated the goal to compute the EMD w.r.t.\ $D_\alpha(\cdot,\cdot)$. In \citet{leeb_holderlipschitz_2016}[Prop. 15], it is shown that $\mathcal{Z}$ is a separable space with respect to the metric $D_\alpha(\cdot,\cdot)$. Hence, we can use the Kantorovich-Rubinstein theorem to express the EMD in its dual form.

First, we need to define the space of functions that are Lipschitz with respect to $D_\alpha(\cdot,\cdot)$. For a fix $\alpha\in(0,1)$, we note this space by $\Lambda_\alpha$. It corresponds to the set of functions $f$ on $\mathcal{Z}$ such that the norm 
\begin{equation*}
    \norm{f}_{\Lambda_\alpha} := \sup_x\abs{f(x)} + \sup_{x\neq y}\frac{\abs{f(x)-f(y)}}{D_\alpha(x,y)}
\end{equation*}
is finite. Next, we need to define the space dual to $\Lambda_\alpha$, which is noted by $\Lambda_\alpha^*$. For a function $f\in\Lambda_\alpha$ and a $L^1$ measure $T$, we define $\langle f , T\rangle := \int_\mathcal{Z} f dT$. The space $\Lambda_\alpha^*$ is the space of $L^1$ measure with the norm 
\begin{equation*}
    \normdual{T} := \sup_{\norm{f}_{\Lambda_\alpha}\leq 1 }\langle f,T\rangle.
\end{equation*}
In practice, this norm would still be computationally expensive. However, the authors show that the norms
\begin{align*}
    \normdual{T}^{(1)}&:= \norm{P_0^* T}_1 + \sum_{k\geq 0} 2^{-k\alpha}\norm{(P^*_{k+1}-P^*_k)T}_1 \\
    \normdual{T}^{(2)}&:= \norm{P_0^* T}_1 + \sum_{k\geq 0} 2^{-k\alpha}\norm{(P^*_{k}-P^*_0)T}_1
\end{align*}
are equivalent to the norm $\normdual{\cdot}$ on $\Lambda_\alpha^*$ \citep[][Thm. 4]{leeb_holderlipschitz_2016}. Where $P^*_k$ is the adjoint of $P_k$. Finally, using the Kantorovich-Rubinstein theorem, we get
\begin{equation*}
    \inf_{\pi \in \Pi(\mu, \nu)} \int_{\mathcal{Z} \times \mathcal{Z}} D_\alpha(x,y) \pi(dx, dy) \normdual{\mu-\nu} \simeq\normdual{\mu-\nu}^{(1)}\simeq\normdual{\mu-\nu}^{(2)},
\end{equation*}
where $D_\alpha(\cdot,\cdot) \simeq \min[1,d(\cdot,\cdot)^{\beta\alpha}]$ and $d(\cdot,\cdot)$ is the ground distance of $\mathcal{Z}$. 

In conclusion, using $\normdual{\cdot}^{(1)}$ or $\normdual{\cdot}^{(2)}$ yields a norm equivalent to $\normdual{\cdot}$. The norm $\normdual{\mu-\nu}$ is equal to the Wasserstein distance between the distributions $\mu$ and $\nu$ with respect to the ground distance $D_\alpha(\cdot,\cdot)$.

\subsection{Proofs of section 4.2}

\begin{customlem}{1}
Assuming that $\mP$ converges (i.e. to its stationary distribution) in polynomial time w.r.t. $\abs{V}$, then there exists a $K = O(\log |V|)$ such that $\vmu_i^{(2^K)} \simeq \mP^{2^K} \bm{1}_{X_i} \approx \phi_0$ for every $i = 1,\ldots,n$, where $\phi_0$ is the trivial eigenvector of $\mP$ associated with the eigenvalue $\lambda_0 = 1$.
\end{customlem}

\begin{proof}
First we notice that $\mP$ is reversible with respect to $\bm{\pi}_i = \mD_{ii}/\sum_i\sum_j\mD_{ij}$. Since $\mP$ is ergodic it converges to its unique stationary distribution $\bm{\pi}$. Moreover, we assumed this convergence to be in polynomial time w.r.t. $\abs{V}$, i.e. we define 
\begin{equation*}
   \Delta_i(k) := \frac{1}{2}\sum_{j} \abs{\mP^{2^k}_{ij}-\bm{\pi}_j}
\end{equation*}
and the mixing time for a $\epsilon>0$
\begin{equation*}
    \tau_i(\epsilon) := \min\{k:\Delta_i(k^\prime)<\epsilon , \,\forall\, k^\prime \ge k\}.
\end{equation*}
Then, by our assumption, there exist $2^K$ = $\tau_i(\epsilon) = O(|V|)$, thus $K = O(\log |V|)$. Intuitively, for all $k\ge K$, each row of the matrix $\mP^{2^k}$ is approximately equal to $\bm{\pi}$ (w.r.t. $\epsilon$), as a consequence $\mP^{2^k} \bm{1}_{X_i} \approx \phi_0$.

\end{proof}

\begin{customthm}{3}
Let $\alpha \in (0,1/2)$ and $(\mathcal{M}, d_{\mathcal{M}})$ be a closed manifold with geodesic $d_{\mathcal{M}}$. The Wasserstein distance w.r.t.\ the diffusion ground distance $D_\alpha(\cdot, \cdot)$ is equivalent to the Wasserstein distance w.r.t\ the snowflake distance $d_{\mathcal{M}}(\cdot, \cdot)^{2\alpha}$ on $\mathcal{M}$, that is $W_{D_\alpha} \simeq W_{d_{\mathcal{M}}^{2\alpha}}.$
\end{customthm}
\begin{proof}
We will prove this for a more general framework, we let two metrics $d_1(\cdot,\cdot)$ and $d_2(\cdot,\cdot)$ on a sigma-finite measure space $\mathcal{Z}$, and let $\mu$ and $\nu$ be two measures. We assume $d_1 \simeq d_2$, that is there exist two constants $c, C > 0$ such that for all $x, y\in\mathcal{Z}$  we have $c\, d_1(x,y)\leq d_2(x,y)\leq C\, d_1(x,y)$. Using the same notation as in \eqref{eq:primal}, for all $\pi \in \Pi(\mu, \nu)$ we have 
\begin{align*}
    c \int_{\mathcal{Z}\times\mathcal{Z}}d_1(x,y)\pi(dx,dy) \le \int_{\mathcal{Z}\times\mathcal{Z}} \,d_2(x,y)\pi(dx,dy)
    \le \, C\int_{\mathcal{Z}\times\mathcal{Z}}d_1(x,y)\pi(dx,dy)
\end{align*}
then 
\begin{align*}
    c \inf_{\pi \in \Pi(\mu, \nu)}\int_{\mathcal{Z}\times\mathcal{Z}}d_1(x,y)\pi(dx,dy) \le \inf_{\pi \in \Pi(\mu, \nu)} \int_{\mathcal{Z}\times\mathcal{Z}} \,d_2(x,y)\pi(dx,dy)
    \le \, C \inf_{\pi \in \Pi(\mu, \nu)} \int_{\mathcal{Z}\times\mathcal{Z}}d_1(x,y)\pi(dx,dy)
\end{align*}
which is the same as $c W_{d_1}(\mu, \nu) \leq W_{d_2}(\mu, \nu) \leq C W_{d_1}(\mu, \nu)$, this proves that whenever $d_1$ and $d_2$ are equivalent then $W_{d_1}$ and $W_{d_2}$ are equivalent as well.
\end{proof}

\begin{customcor}{2.1}
Let $\mP_\epsilon$ be the continuous equivalent of the stochastic matrix in \eqref{eq :P_heat_approx}. For $\epsilon$ small enough, we have:
\begin{equation} \label{eq:supp:lim_aprox_to_heat}
    \widehat{W}_{\mP_\epsilon^{t/\epsilon}} \simeq W_{D_\alpha}.
\end{equation}

\end{customcor}
\begin{proof}
Note that by Theorem~\ref{th: th_heat_was_Da} it is equivalent to show that $\widehat{W}_{\mP_\epsilon^{t/\epsilon}} \simeq \widehat{W}_{\mH_t}$. First note that according to \cite{coifman_diffusion_2006} we have $\norm{\mP_\epsilon^{t/\epsilon} - \mH_t}_{L^2(\mathcal{M})} \to 0$ as $\epsilon \to 0$, note that since $\mathcal{V}\mathrm{ol}(\mathcal{M}) < \infty$ (closed Manifold) and according to Cauchy-Schwarz inequality we get $\norm{\mP_\epsilon^{t/\epsilon} - \mH_t}_{L^1(\mathcal{M})} \to 0$ as $\epsilon \to 0$. Generally speaking, convergence in $L^2$ implies convergence in $L^1$ when the manifold is closed. Now put $\mD_{\epsilon, t} = \mP_\epsilon^{t/\epsilon} - \mH_t$, then we have $\norm{\mD_{\epsilon, t}}_{L^1(\mathcal{M})} \to 0$ as $\epsilon \to 0$. Let $\mu$ and $\nu$ be two measures and let $\delta > 0$, choose $\epsilon>0$ small enough so that $\norm{\mD_{t, \epsilon} \gamma}_1 < \delta \norm{\gamma}_1$ for all $t>0$, where  $\gamma = \mu - \nu$,
\begin{align*}
    \widehat{W}_{\mD_{\epsilon, t}}(\mu, \nu) 
    &= \left\| \mD_{\epsilon, 1} \gamma \right\|_1 + \sum_{k \geq 0} 2^{-k \alpha} \left\|\left(\mD_{\epsilon, 2^{-(k+1)}} - \mD_{\epsilon, 2^{-k}}\right)\gamma\right\|_1 \\
    &\leq \left\| \mD_{\epsilon, 1} \gamma \right\|_1 + \sum_{k \geq 0} 2^{-k \alpha} \left\|\mD_{\epsilon, 2^{-(k+1)}} \gamma \right\|_1 + 2^{-k \alpha} \left\|\mD_{\epsilon, 2^{-k}}\gamma\right\|_1 \\
    &\leq \delta \norm{\gamma}_1 + 2 \, \delta \, \norm{\gamma}_1  \sum_{k \geq 0} 2^{-k\alpha} \\ 
    & = \delta \norm{\gamma}_1 \left(1 + \frac{2}{1 - 2^{-\alpha}}\right).
\end{align*}
This proves that for all $t > 0$, for all $\delta > 0$ there exists an $\epsilon > 0$ sufficiently small such that
\begin{equation*}
     \widehat{W}_{\mD_{\epsilon, t}}(\mu, \nu)  \leq \delta \norm{\mu - \nu}
\end{equation*}
Note also that using the reverse triangle inequality we can easily show that $\left| \widehat{W}_{\mP_{\epsilon}^{t/\epsilon}}(\mu, \nu) - \widehat{W}_{\mH_{t}}(\mu, \nu)\right| \leq \widehat{W}_{\mD_{\epsilon, t}}(\mu, \nu)$, let $\delta > 0$ then for  $\epsilon > 0$ small enough we get
\begin{equation*}
   \widehat{W}_{\mH_{t}}(\mu, \nu) - \delta \norm{\mu - \nu}_1 \leq  \widehat{W}_{\mP_{\epsilon}^{t/\epsilon}}(\mu, \nu)  \leq \widehat{W}_{\mH_{t}}(\mu, \nu) + \delta \norm{\mu - \nu}_1,
\end{equation*}
according to \citeSupp{wang1997sharp} we can get lower bounds of the heat kernel (which implies lower bounds for the heat operator), we have $\widehat{W}_{\mH_{t}}(\mu, \nu) \geq \norm{ \mH_{1} (\mu - \nu) }_1 \geq C \norm{\mu - \nu}_1$ for some $C > 0$. If $\delta < C/2 $ then for sufficiently small $\epsilon > 0$,
\begin{equation*}
    \frac{1}{2}\widehat{W}_{\mH_{t}}(\mu, \nu) \leq  \widehat{W}_{\mP_{\epsilon}^{t/\epsilon}}(\mu, \nu)  \leq \frac{3}{2} \widehat{W}_{\mH_{t}}(\mu, \nu)
\end{equation*}
which completes the proof.
\end{proof}

\begin{customcor}{3.1}
For each $1 \leq i,j \leq m$ let $X_i, X_j \in \mathcal{X}$ be two datasets with size $n_i$ and $n_j$ respectively, and let $\mu_i$ and $\mu_j$ be the continuous distributions corresponding to the ones of $X_i$ and $X_j$, let $K$ be the largest scale and put $N =\min(K, n_i,n_j)$. Then, for sufficiently big $N \to \infty$ (implying sufficiently small $\epsilon = 2^{-K} \to 0$):
\begin{equation}
    W_{\alpha, K}(X_i, X_j) \simeq W_{d_{\mathcal{M}}^{2\alpha}}(\mu_i, \mu_j),
\end{equation}
for all $\alpha \in (0, 1/2)$.
\end{customcor}
\begin{proof}
First define
\begin{equation}\label{eq: W_discrete_measures}
    W_{\alpha, K, \epsilon}(X_i, X_j) := \norm{\bm{\mu}_i^{(1/\epsilon)} - \bm{\mu}_j^{(1/\epsilon)} }_1 + \sum_{k = 0}^{K - 1}2^{-k\alpha}\norm{(\bm{\mu}_{i}^{(2^{-(k+1)}/\epsilon)} - \bm{\mu}_{i}^{(2^{-k}/\epsilon)}) - (\bm{\mu}_{j}^{(2^{-(k+1)}/\epsilon)} - \bm{\mu}_{j}^{(2^{-k}/\epsilon)})}_1,
\end{equation}
note that for $\epsilon = 2^{-K}$ we have $W_{\alpha, K, \epsilon}(X_i, X_j) = W_{\alpha, K}(X_i, X_j)$. Since $n_i \to \infty$ and $n_j \to \infty$, then using Monte-Carlo integration we get
\begin{equation*}
    \lim_{n_i, n_j \to \infty} W_{\alpha, K, \epsilon}(X_i, X_j) = W_{\alpha, K, \epsilon}(\mu_i, \mu_j)
\end{equation*}
where $W_{\alpha, K, \epsilon}(\mu_i, \mu_j)$ has the same expression as in~\eqref{eq: W_discrete_measures} (replacing the discrete measure $\bm{\mu}_i$'s with the continuous measures $\mu_i$'s),
\begin{equation*}
    W_{\alpha, K, \epsilon}(\mu_i, \mu_j) = \norm{\mP_{\epsilon}^{1/\epsilon} \mu_i - \mP_{\epsilon}^{1/\epsilon} \mu_j }_1 + \sum_{k = 0}^{K - 1}2^{-k\alpha}\norm{(\mP_{\epsilon}^{2^{-(k+1)}/\epsilon} \mu_i - \mP_{\epsilon}^{2^{-k}/\epsilon} \mu_i) - (\mP_{\epsilon}^{2^{-(k+1)}/\epsilon} \mu_j - \mP_{\epsilon}^{2^{-k}/\epsilon}\mu_j)}_1
\end{equation*}

note that by definition of $\widehat{W}_{\mP_{\epsilon}^{t/\epsilon}}$ we have $\lim_{K \to \infty} W_{\alpha, K, \epsilon}(\mu_i, \mu_j) = \widehat{W}_{\mP_{\epsilon}^{t/\epsilon}}(\mu_i, \mu_j)$ and thus 
$$
\lim_{N \to \infty} W_{\alpha, K, \epsilon}(X_i, X_j) = \widehat{W}_{\mP_{\epsilon}^{t/\epsilon}}(\mu_i, \mu_j)
$$ 

combining this result with Corollary~\ref{th: cor_W_using_P_to_H} yields $ W_{\alpha, K, \epsilon}(X_i, X_j) \simeq W_{D_{\alpha}}(\mu_i, \mu_j)$ for $N$ large enough and $\epsilon$ small enough. Now if we take $\epsilon = 2^{-K}$ and apply Theorem~\ref{th: W_DA equiv W_MIN} we get $W_{\alpha, K}(X_i, X_j) \simeq W_{d_{\mathcal{M}}^{2\alpha}}(\mu_i, \mu_j)$ as required.

\end{proof}

\section{Related Work}\label{sec:supp:related}

Wavelet-based linear time approximations of the earth mover's distance have been investigated by \citet{indyk_fast_2003} who used randomly shifted multi-scale grids to compute EMD between images. \citet{shirdhonkar_approximate_2008} expanded this to wavelets over $\mathbb{R}^n$ showing the benefit of using smooth bins over the data, and the relationship to the dual problem. \citet{leeb_holderlipschitz_2016} investigated using wavelets over more general metric spaces. We build off of this theory and efficiently approximate this distance on discrete manifolds represented by sparse kernels.

Another line of work in approximations to the Wasserstein distance instead works with tree metrics over the data~\cite{leeb_mixed_2018, le_tree-sliced_2019, backurs_scalable_2020, sato_fast_2020}. These tree methods also linear time however are built in the ambient dimension and do not represent graph distances as well as graph based methods as shown in Sec.~\ref{sec:results}. Averaging over certain tree families can linked to approximating the EMD with a Euclidean ground distance in $\mathbb{R}^n$~\cite{indyk_fast_2003}. Many trees may be necessary to smooth out the effects of binning over a continuous space as is evident in Figure~\ref{fig:1d}. Diffusion EMD uses multiple scales of smooth bins to reduce this effect, thus giving a smoother distance which is approximately equivalent to a snowflake of the $L^2$ distance on the manifold.

A third line of work considers entropy regularized Wasserstein distances~\cite{cuturi_sinkhorn_2013, benamou_iterative_2014, solomon_convolutional_2015}. These methods show that the transportation plan of the regularized 2-Wasserstein distance is a rescaling of the heat kernel accomplished by the iterative matrix rescaling algorithm known as the Sinkhorn algorithm. In particular, \citet{solomon_convolutional_2015} links the time parameter $t$ in the heat kernel $H_t$ to the entropy regularization parameter and performs Sinkhorn scaling with $H_t$. While applying the heat kernel is efficient, to embed $m$ distributions with the entropy regularized Wasserstein distance is $O(m^2)$ as all pairwise distances must be considered. Next we explore the linear time algorithms based on multiscale smoothing for compute the earth mover's distance.

\subsection{Multiscale Methods for Earth Mover's Distance}\label{sec:supp:multiscale}

Let a (possibly randomized) transformation $T$ map distributions on $\Omega$ to a set of multiscale bins $\vb$, $T: \mu(\Omega) \rightarrow \vb$, these methods define a $T$ such that
\begin{equation}\label{eq:multiscale_form}
W_d(\mu, \nu) \approx \mathbb{E} \|T (\mu) - T (\nu)\|_1.
\end{equation}
Where the approximation, the randomness, and the transform depend on the exact implementation. All of these methods have two things in common, they smooth over multiple scales 

\citet{indyk_fast_2003} presented one of the early methods of this type. They showed that by computing averages over a set of randomly placed grids at dyadic scales. These grids work well for images where the pixels form a discrete set of coordinates.

\citet{shirdhonkar_approximate_2008} showed how to generalize this work over images to more general bin types, more specifically they showed how wavelets placed centered on a grid could replace the averages in \citet{indyk_fast_2003}. This work linked the earth mover's distance in the continuous domain to that of the discrete through wavelets in $\mathbb{R}^d$, showing that for some wavelets the EMD approximation was better than the previous grid method. In Diffusion EMD we generalize these wavelets to the graph domain using diffusion wavelets on the graph allowing Wasserstein distances with a geodesic ground distance.

\citetSupp{kolouri_sliced_2016} in Sliced Wasserstein distances showed how the Wasserstein distance could be quickly approximated by taking the Wasserstein distance along many one dimensional slices of the data. This can be thought of binning along one dimension where the cumulative distribution function is represented along $n$ bins (one for each point) with each bin encompassing one more point than the last.

\citet{le_tree-sliced_2019} generalized the Sliced Wasserstein distance back to trees, giving a new method based on multi-level clustering where the data is partitioned into smaller and smaller clusters where the bins are the clusters. They demonstrated this method on high dimensional spaces where previous methods that had $d^{2^k}$ bins at scale $k$ were inefficiently using bins. By clustering based on the data, their ClusterTree performs well in high dimensions. However, by clustering they lose the convergence to the Wasserstein distance with Euclidean ground distance of previous methods for efficiency in high dimensions.

Diffusion EMD uses smooth diffusion wavelets over the graph which can give geodesic distances unlike previous multiscale methods. Furthermore, Diffusion EMD selects a number of bins that depends on the rank of the data at each dyadic scale, which can lead to smaller representations depending on the data. This is summarized in Table~\ref{tab:multiscale}, which details these differences.

\begin{table}[ht]
\caption{Comparison of Multiscale Methods for Earth Mover's Distance}
    \label{tab:multiscale}
    \begin{center}
\begin{small}
\begin{sc}
\adjustbox{width=\linewidth}{
    \begin{tabular}{lrrrrr}
        \toprule
        Method & Scales & Bins per scale & Data Dependent Centers & Smooth Bins & Geodesic Distances \\
        \midrule
        \citet{indyk_fast_2003} & Dyadic & $d^{2^k}$ & No & No & No  \\
        \citet{shirdhonkar_approximate_2008} & Dyadic & $d^{2^k}$ & No & Yes & No \\
        \citetSupp{kolouri_sliced_2016} & $n$ & 1 & Yes & No & No \\
        \citet{le_tree-sliced_2019} & Specified & $C^k$ & Yes & No & No \\
        Diffusion EMD & Dyadic & Rank Dependent & Yes & Yes & Yes
    \end{tabular}}
    \end{sc}
    \end{small}
    \end{center}
\end{table}

\section{Algorithm Details}\label{sec:supp:algorithm}
In this section we present two algorithms for computing the Diffusion EMD, the Chebyshev approximation method and the interpolative decomposition method. The Chebyshev method is more effective when the number of distributions is relatively small. The interpolative decomposition method is more effective when the number of distributions is large relative to the size of the manifold. We also detail how to subsample the normed space based on the spectrum of $\mP$ which can be approximated quickly.

First we define the approximate rank up to precision $\delta$ of a matrix $A \in \mathbb{R}^{n \times n}$ as:
\begin{equation}
    R_\delta(A) := \# \left \{ i : \sigma_i(A) / \sigma_0(A) \ge \delta \right \}
\end{equation}
Where $\sigma_i(A)$ is the $i$th largest singular value of the matrix $A$. The approximate ranks of dyadic powers of $\mP^{2^k}$ are useful for determining the amount of subsampling to do at each scale either after an application of the Chebyshev method or during the computation of (approximate) dyadic powers of $\mP$. We note that based on the density of singular values of $\mP$, which is quickly computable using the algorithm presented in \citet{dong_network_2019}, the approximate rank of all dyadic powers of $\mP$ can be calculated without computing powers of $\mP$.

\paragraph{Chebyshev Approximation of Diffusion EMD}

We first note that $\mP^{2^k}$ can be computed spectrally using a filter on its eigenvalues. Let $\mM = \mD^{1/2} \mP \mD^{-1/2}$ be the symmetric conjugate of $\mP$. Then $\mM$ is symmetric and has eigenvalues lying in the range $-1 \le \lambda_0 \le \lambda_1 \le \ldots \le \lambda_n \le 1$. $\mM$ can be decomposed into $\mU \mSigma \mU^T$ for orthonormal $\mU$ and $\mSigma$ a diagonal matrix of $[\lambda_0, \lambda_1, \ldots, \lambda_n]$. We then express $\mP^{2^k}$ as a filter on the eigenvalues of the $\mM$:
\begin{equation}
    \mP^{2^k} = \mD^{-1/2} \mU (\mSigma)^{2^k} \mU^T \mD^{1/2}
\end{equation}
We compute the first $J$ Chebyshev polynomials of $\mP^j \vmu$ then use the polynomial filter on the eigenvalues $h(\sigma) = \sigma^{2^k}$ to compute diffusions of the distributions, and reweight these diffusion bins as specified in \Algref{alg:cheb_embedding}. This algorithm requires $J$ sparse matrix multiplications of $\mP$ and $\vmu$. For a total time complexity of $\tilde{O}(J m n)$ when $\mP$ is sparse.



\paragraph{Interpolative Decomposition Approximation of Diffusion EMD}

Our second method proceeds by directly approximating multiplication by the matrix $\mP^{2^k}$. The naive solution of computing dyadic powers of $\mP$ on the original basis quickly leads to a dense $n \times n$ matrix. \citet{coifman_diffusion_2006-1} observed that $\mP^{2^k}$ is of low rank, and therefore multiplication by $\mP^{2^k}$ can be approximated on a smaller basis. In that work they introduced a method of iteratively reducing the size of the basis using rank-revealing pivoted sparse QR decomposition, then computing $\mP^{2^{k+1}} = \mP^{2^k} \mP^{2^k}$. By reducing the size of the basis and orthogonalizing the basis is kept small and $\mP^{2^k}$ is kept sparse on its corresponding basis. For sparse $\mP$ this gives an algorithm that is $O(n \log^2 n)$ for computing dyadic powers of a sparse matrix.

Our algorithm follows the same lines as theirs however we use interpolative decomposition to reduce the size of the basis, and do not do so at early stages where the size of the basis may still be a significant fraction of $n$. Interpolative decomposition allows for an easy interpretation: we are selecting a well spaced set of points that acts as a basis for higher levels of $\mP^{2^k}$. For more details on interpolative decomposition we refer the reader to \citet{liberty_randomized_2007, bermanis_multiscale_2013}. The algorithm in \citet{coifman_diffusion_2006-1} also computes powers of $\mP^{2^k}$ in the original basis by projecting the multiplication in the reduced basis back to the original. We do not bother computing $\mP^{2^k}\vmu$ in the original basis, but instead use the reduced basis directly to compare distributions against. Denote the basis at level $k$ as $\phi_k$, which is really a subset of $\phi_{k-1}$, then $\phi_k$ can be thought of as the centers of the bins at level $k$ of our multiscale embedding. We only compare distributions at these representative centers rather than at all datapoints. This is summarized in \Algref{alg:id_embedding}, creating an embedding for each distribution whose length is dependent on the rank of $\mP$.

\paragraph{Sampling the Diffusions at Larger Scales}
Interpolative decomposition is an integral part of \Algref{alg:id_embedding}. However, it can also be useful in reducing the size of the distribution representations of \Algref{alg:cheb_embedding} which uses the Chebyshev approximation. Without sampling the Chebyshev algorithm centers a bin at every datapoint at every scale. However, for larger scales this is unnecessary, and we can take advantage of the low rank of $\mP^{2^k}$ with interpolative decomposition. In practice, we use the top 6 largest scales, in fact the distance shows very little sensitivity to small scales (see Fig.~\ref{fig:ablation}(d)). In fact, this representation can be compressed without a significant loss in performance as shown in Fig.~\ref{fig:ablation}(b,c). We apply the rank threshold to the spectrum of $\mP^{2^k}$, which can be computed from the density of states algorithm described above, to determine the number and location of centers to keep. With $\delta = 10^{-6}$ this reduces the number of centers from $60,000$ to $<4,000$ with a small loss in performance. Rather than selecting a fixed number of centers per scale as in previous methods, this allows us to vary the number of centers per scale as necessary to preserve the rank at each scale.

\paragraph{Time Complexity} Here, we assume that $\mP$ has at most $\tilde{O}(n) = O(n \log^c n)$ nonzero entries. In this case, interpolative decomposition in \Algref{alg:id} has complexity $O(k \cdot m \cdot n \log n)$, and the randomized version in \Algref{alg:rand_id} has complexity $O(k m + k^2 n)$~\cite{liberty_randomized_2007}. \Algref{alg:cheb_embedding} has complexity $O(J m n \log^c n)$ which is dominated by calculation of the $J$ powers of $\mP^j \vmu$. The computation time of \Algref{alg:id_embedding} is dominated by the first randomized interpolative decomposition step. When we set $\gamma = O(\log^c n)$, which controls when the application of RandID occurs, this gives a total time complexity for \Algref{alg:id_embedding} of $\tilde{O}(m n) = O(n \log^{2c} n + m n \log^b n )$, where $b$ and $c$ are constants controlled by $\gamma$. As $\gamma$ increases, $b$ increases and $c$ decreases. The first term is dominated by the first interpolative decomposition setup, and the second is the calculation of $\vmu^{(2^k)}$. It follows that when $m \gg n$, it is helpful to set $\gamma$ larger for a better tradeoff of $b$ and $c$. In practice we set $\gamma = n / 10$ which seems to offer a reasonable tradeoff between the two steps. For small $m$ in practice \Algref{alg:cheb_embedding} is significantly faster than \Algref{alg:id_embedding}.

\begin{algorithm}[tb]
    \caption{Interpolative Decomposition}
    \label{alg:id}
\begin{algorithmic}
    \STATE {\bfseries Input:} $m \times n$ matrix $A$ and number of subsamples $k$ s.t. $k < \min\{m,n\}$.
    \STATE {\bfseries Output:} $m \times k$ matrix $B$ and $k \times n$ matrix $C$ s.t. $\|AP - BC\|_2 \le \sqrt{4k(n-k) + 1} \sigma_{k+1}(A)$, where $B$ consists of $k$ columns of $A$. 
    \STATE Perform pivoted QR decomposition on $A$ s.t. $AP = QR$, where $P$ is a permutation matrix of $A$.
    \STATE Where Q and R are decomposed as the following with $Q_1$ is a $m \times k$ matrix, $Q_2$ is $m \times (n - k)$, etc. 
    \begin{equation*}
        Q = [Q_1 \mid Q_2]; \quad\quad
        R = \begin{bmatrix}
            R_{11} & R_{12} \\ 
            \mathbf{0}_{k \times n-k} & R_{22} 
            \end{bmatrix}
    \end{equation*}
    \STATE $B \leftarrow Q_1 R_{11}$
    \STATE $C \leftarrow [I_k | R_{11}^{-1} R_{12}]$
\end{algorithmic}
\end{algorithm}

\begin{algorithm}[tb]
    \caption{Randomized Interpolative Decomposition: $\text{RandID}(\mA, j, k, l)$}
    \label{alg:rand_id}
\begin{algorithmic}
    \STATE {\bfseries Input:} $m \times n$ matrix $A$, random sample sizes $j, l$, and number of subsamples $k$ s.t. $\min\{m,n\} > j > k > l$.
    \STATE {\bfseries Output:} $m \times k$ matrix $B$ and $k \times n$ matrix $C$ s.t. $\|A - BC\|_2 \le \sqrt{4k(n-k) + 1} \sigma_{k+1}(A)$ w.h.p., where $B$ consists of $k$ columns of $A$.
    \STATE $G_1 \leftarrow \mathcal{N}(0, 1)^{j \times m}$
    \STATE $W \leftarrow G_1 A$ 
    \STATE Perform the (deterministic) interpolative decomposition on $W$ in \Algref{alg:id} to get $\tilde{B}, C$ s.t.  $\|W - \tilde{B}D\|_2$ small.
    \STATE $G_2 \leftarrow \mathcal{N}(0,1)^{l \times j}$
    \STATE $W_2 \leftarrow G_2 W$ 
    \STATE $\tilde{B}_2 \leftarrow G_2 \tilde{B}$
    \STATE Find the $k$ indices of $\tilde{B}_2$ that are approximately equal to columns in $W_2$, $[i_1, i_2, \ldots, i_k]$.
    \STATE $\vb \leftarrow [\vb_1, \vb_2, \ldots, \vb_K]$
\end{algorithmic}
\end{algorithm}


\begin{algorithm}[tb]
    \caption{Interpolative Decomposition diffusion densities $T_{id}(\mK, \vmu, K, \alpha, \delta, \gamma)$}
    \label{alg:id_embedding}
\begin{algorithmic}
    \STATE {\bfseries Input:} $n \times n$ graph kernel $\mK$, $n \times m$ distributions $\vmu$, maximum scale $K$, snowflake constant $\alpha$, rank threshold $\delta$, and basis recomputation threshold $\gamma$.
    \STATE {\bfseries Output:} Distribution embeddings $\vb$
    \STATE $\mQ \leftarrow Diag(\sum_i \mK_{ij})$
    \STATE $\mK^{norm} \leftarrow \mQ^{-1} \mK \mQ^{-1}$
    \STATE $\mD \leftarrow Diag(\sum_i \mK^{norm}_{ij})$
    \STATE $\mM \leftarrow \mD^{-1/2}\mK^{norm}\mD^{-1/2}$
    \STATE $\vmu^{(2^0)} \leftarrow \mP \vmu \leftarrow \mD^{-1/2}\mM\mD^{1/2} \vmu$
    \STATE $\vmu_0 \leftarrow \vmu$
    \STATE $n_0 \leftarrow n$
    \FOR{$k=1$ {\bfseries to} $K$}
        \IF {$R_\delta(\mM^{2^k}) < \gamma $}
            \STATE $\mB_k, \mC_k \leftarrow \text{RandID}(\mM_{k-1}, R_\delta(\mM^{2^k})+8, R_\delta(\mM^{2^k}), 5)$
            \STATE $n_k \leftarrow R_\delta(\mM^{2^k})$
        \ELSE
            \STATE $\mB_k, \mC_k, n_k \leftarrow \mB_{k-1}, \mI_{n_{k-1}}, n_{k-1} $
        \ENDIF
        \STATE $\mM_k \leftarrow \mC_k \mM_{k-1} \mM_{k-1}^T \mC_k^T$
        \STATE $\vmu_k \leftarrow \mC_k \vmu_{k-1}$
        \STATE $\vmu^{(2^k)} \leftarrow \mD^{-1/2} \mM_k  \mD^{1/2} \vmu_k$
        \STATE $\vb_{k-1} \leftarrow 2^{(K-k-1)\alpha}(\vmu^{(2^k)} - \mC_k \vmu^{(2^{k-1})})$ 
    \ENDFOR
    \STATE $\vb_{K} \leftarrow \vmu^{(2^K)}$
    \STATE $\vb \leftarrow [\vb_0, \vb_1, \ldots, \vb_K]$
\end{algorithmic}
\end{algorithm}

\subsection{Gradients of the Diffusion EMD}\label{subsec:gradient}
In this section we will  develop the computation of the gradient of the Diffusion EMD $W_{\alpha, K}(X_i, X_j)$, we will show that its gradient depend only on the gradient of the Gaussian kernel matrix $\mK_\epsilon$ which is easy to compute. We start by recalling some basic facts about the gradient of matrices,
\begin{enumerate}
    \item For a matrix $P$ and a vector $\mu$ we have $\nabla \norm{P \mu}_1 = \mathrm{sign}(P \mu) \mu^T \nabla P $. 
    \item For an invertible matrix $P$, the gradient of the inverse of $P$ is $\nabla (P^{-1}) = -P^{-1} \nabla(P) P^{-1}$.
    \item Let $n\in \mathbb{N}^*$ the gradient of $P^n$ is 
    \begin{equation*}
        \nabla P^n = \sum_{\ell = 0}^{n - 1} P^{\ell} \nabla(P) P^{n - \ell -1}.
    \end{equation*}
\end{enumerate}
Using the definition of the Diffusion EMD, we have
\begin{align*}
    &\nabla W_{\alpha, K}(X_i, X_j) = \nabla \norm{\mP_{\epsilon}^{2^K}(\bm{\mu}_j - \bm{\mu}_i)}_1 + \sum_{k = 0}^{K - 1}2^{-(K-k-1)\alpha}\nabla \norm{\left(\mP_{\epsilon}^{2^{k+1}} - \mP_{\epsilon}^{2^k}\right)(\bm{\mu_j} - \bm{\mu}_i)}_1 \\ 
    &= \mathrm{sign} \left( \mP_{\epsilon}^{2^K}(\bm{\mu}_j - \bm{\mu}_i) \right) (\bm{\mu}_j - \bm{\mu}_i)^T \nabla(\mP_{\epsilon}^{2^K}) + \sum_{k = 0}^{K - 1}2^{-(K-k-1)\alpha} \mathrm{sign}\left(\mP_{\epsilon}^{2^{k+1}} - \mP_{\epsilon}^{2^k}\right) (\bm{\mu_j} - \bm{\mu}_i))^T \nabla(\mP_{\epsilon}^{2^{k+1}} - \mP_{\epsilon}^{2^k}). \\ 
\end{align*}
The last formula tells us that the gradient of $W_{\alpha, K}(X_i, X_j)$ depends only on the gradient of the powers of $\mP_\epsilon$ which in turn can be expressed in terms of the gradients $\mP_{\epsilon}$. We have:
\begin{align*}
    \nabla P_\epsilon &= \nabla(D^{-1}_{\epsilon} M_{\epsilon} D^{-1}_{\epsilon}) \\
    &= \nabla (D^{-1}_{\epsilon})M_{\epsilon} + D_{\epsilon}^{-1}\nabla(M_{\epsilon}) \\
    &= -D_{\epsilon}^{-1}\nabla(D_{\epsilon})D_{\epsilon}^{-1} M_{\epsilon} + D_{\epsilon}^{-1}\nabla(M_{\epsilon}) \\ 
    &= -D_{\epsilon}^{-1}\nabla(\mathrm{diag}(M_{\epsilon} \mathbf{1}))D_{\epsilon}^{-1} M_{\epsilon} + D_{\epsilon}^{-1}\nabla(M_{\epsilon}) \\
    &= -D_{\epsilon}^{-1} \mathrm{diag}(\nabla(M_{\epsilon}) \mathbf{1}) D_{\epsilon}^{-1} M_{\epsilon} + D_{\epsilon}^{-1}\nabla(M_{\epsilon}), \\
\end{align*}
therefore in order to compute $\nabla P_\epsilon$ we need to compute $\nabla M_\epsilon$,
\begin{align*}
    \nabla M_{\epsilon} &= \nabla(Q_{\epsilon}^{-1} K_{\epsilon} Q_{\epsilon}^{-1}) \\
    & = \nabla(Q_{\epsilon}^{-1}) K_{\epsilon} Q_{\epsilon}^{-1} + Q_{\epsilon}^{-1} \nabla(K_{\epsilon}) Q_{\epsilon}^{-1} + Q_{\epsilon}^{-1} K_{\epsilon} \nabla(Q_{\epsilon}^{-1})\\
    & = -Q_{\epsilon}^{-1}\nabla(Q_{\epsilon}) Q_{\epsilon}^{-1} K_{\epsilon} Q_{\epsilon}^{-1}
    + Q_{\epsilon}^{-1}\nabla(K_\epsilon) Q_{\epsilon}^{-1}
    - Q_{\epsilon}^{-1}K_{\epsilon} Q_{\epsilon}^{-1} \nabla(Q_{\epsilon})  Q_{\epsilon}^{-1} \\ 
    & = -Q_{\epsilon}^{-1}\nabla(\mathrm{diag}(K_{\epsilon} \mathbf{1})) Q_{\epsilon}^{-1} K_{\epsilon} Q_{\epsilon}^{-1}
    + Q_{\epsilon}^{-1}\nabla(K_\epsilon) Q_{\epsilon}^{-1}
    - Q_{\epsilon}^{-1}K_{\epsilon} Q_{\epsilon}^{-1} \nabla(\mathrm{diag}(K_{\epsilon} \mathbf{1}))  Q_{\epsilon}^{-1} \\ 
    & = -Q_{\epsilon}^{-1}\mathrm{diag}(\nabla(K_{\epsilon}) \mathbf{1}) Q_{\epsilon}^{-1} K_{\epsilon} Q_{\epsilon}^{-1}
    + Q_{\epsilon}^{-1}\nabla(K_\epsilon) Q_{\epsilon}^{-1}
    - Q_{\epsilon}^{-1}K_{\epsilon} Q_{\epsilon}^{-1} \mathrm{diag}(\nabla(K_{\epsilon}) \mathbf{1})  Q_{\epsilon}^{-1}
\end{align*}

hence the gradient of $M_{\epsilon}$ can be written only in terms of the gradient of $K_\epsilon$. This process shows that the gradient of the EMD diffusion can be expressed in terms of the gradient of the Gaussian kernel only, which may make it useful in future applications where fast gradients of the earth mover's distance are necessary. For example, in distribution matching where previous methods use gradients with respect to the Sinkhorn algorithm~\cite{frogner_learning_2015}, which scales with $O(n^2)$. In this application, computation of gradients of Diffusion EMD would be $O(n)$. What makes this possible is a smooth kernel and smooth density bins over the graph. In contrast to Diffusion EMD, multiscale methods that use non-smooth bins have non-smooth gradients with respect to $K_\epsilon$, and so are not useful for gradient descent type algorithms. We leave further investigation into the gradients of Diffusion EMD to future work. 

\section{Experiment details}\label{sec:supp:experiments}

We ran all experiments on a 36 core Intel(R) Xeon(R) CPU E5-2697 v4 @ 2.30GHz with 512GB of RAM. We note that our method is extremely parallelizable, consisting of only matrix operations, thus a GPU implementation could potentially speed up computation. For fair comparison we stick to CPU implementations and leave GPU acceleration to future work. We provide custom implementations of all methods in the \texttt{DiffusionEMD} package, which is available on Github. We base our Sinkhorn implementation on that in the python optimal transport package (POT)~\citeSupp{flamary_pot_2021}, and our tree methods off of the sklearn nearest neighbors trees~\cite{pedregosa_scikit-learn:_2011}.

\subsection{Metrics used}
\paragraph{P@10.} P@10 is a metric that measures the overlap in the 10-nearest neighbors between the nearest neighbors of the EMD method and the ground truth nearest neighbors, which is calculated using exact OT on the geodesic ground distance. P@10 is useful for distinguishing the quality of distance calculations locally, whereas the next metric measures the quality more globally. This may be particularly important in applications where only the quality of nearest neighbors matters as in Sect~\ref{sec:methods:embeddings}, or in applications of Wasserstein nearest neighbor classification~\cite{indyk_fast_2003, backurs_scalable_2020}. 

\paragraph{Spearman-$\rho$.} The Spearman's rank correlation coefficient (also known as Spearman-$\rho$) measures the similarity in the order of two rankings. This is useful in our application because while the 1-Wasserstein and 2-Wasserstein may have large distortion as metrics, they will have a high Spearman-$\rho$ in all cases. Spearman-$\rho$ is defined for two rankings of the data where $d_i$ is the difference in the rankings of each for each datapoint $i$ as
\begin{equation*}
    \rho = 1 - \frac{6 \sum d_i^2}{n (n^2 - 1)}
\end{equation*}
This is useful for quantifying the global ranking of distributions in the graph, which shows the more global behavior of EMD methods. Since locally manifolds with low curvature look very similar to $\mathbb{R}^d$, where $d$ is the intrinsic dimension of the manifold, for distributions very close together on the manifold using a Euclidean ground distance may be acceptable, however for distributions far away from each other according to a geodesic ground distance, approximation with a Euclidean ground distance may perform poorly.

\subsection{Line Example}

In Fig~\ref{fig:1d} we used a cluster tree with 4 levels of 4 clusters for the cluster tree, and a quad tree of depth 4. These are mostly for illustrative purposes so we there was no parameter search for ClusterTree or QuadTree. As the number of trees gets larger ClusterTree and QuadTree start to converge to some smooth solution. ClusterTree has a number of bumps caused by the number of clusters parameter that do not disappear even with many trees. Quadtree converges to a smooth solution as the number of trees gets large in this low dimensional example.

\subsection{Swiss Roll}

On the swiss roll example we compared Diffusion EMD to ClusterTree, QuadTree, and the convolutional Sinkhorn method. Here we describe the parameter setup in each of these methods for Figure~\ref{fig:swissroll_embed}, \ref{fig:swissroll}, and \ref{fig:ablation}. 

For Fig.~\ref{fig:swissroll_embed} we chose parameters that approximately matched the amount of time between methods on this dataset to the Diffusion EMD with default settings using the Chebyshev algorithm. This was 150 cluster trees of depth 8 with 3 cluster, and 20 quad trees of depth 4. We noticed that the speed of quadtree diminishes exponentially with dimension. 

For Fig.~\ref{fig:swissroll} we did a grid search over parameters for QuadTree and Cluster Tree to find the best performing settings of depth and number of clusters. We searched over depth $d \in [2 \ldots 10]$ and number of clusters $C \in [2 \ldots 5]$ for ClusterTree. We found a depth of 4 for QuadTree and 3 Clusters of depth 5 for ClusterTree gave the best tradeoff of performance vs. time on this task. These parameters were tried while fixing the number of trees to 10. We fixed this depth and number of clusters for subsequent experiments varying the number of trees in the range $[1 \ldots 100]$ for QuadTree and in the range $[1 \ldots 15]$ for ClusterTree. The maximum of this range exceeded the time allotment of Diffusion EMD in both cases. For the convolutional Sinkhorn method we fixed parameter $t=50$ for $H_t$, and fixed the maximum number of iterations in the range $[1 \ldots 100]$. This took orders of magnitude longer than the multiscale methods in all cases, with comparable performance to Diffusion EMD on the P@10 metric and better performance using the Spearman-$\rho$ metric. We note that a similar multi-step refinement scheme as implemented in \citet{backurs_scalable_2020} could be used here.

\subsection{MNIST}
To construct the Spherical MNIST dataset~\cite{cohen_convolutional_2017}, we projected the 70,000 images in classical MNIST onto a spherical graph containing 25,088 nodes. Every pixel in each MNIST image was projected onto four nodes on the northern hemisphere. These projections were treated as signals over a spherical graph.

We ran Diffusion EMD, ClusterTree, and QuadTree on this dataset to generate embeddings in L1 space. Diffusion EMD was run with the diffusion wavelet approximations described in the paper, using an epsilon value of $0.001$. To best compare accuracy, we chose parameters for QuadTree and ClusterTree that approximately matched the runtime of Diffusion EMD. For QuadTree, we used a depth of 20 and 10 clusters. For ClusterTree, we used a depth of 5. However, we note that both ClusterTree and QuadTree experienced a variance of 10-30 minutes between subsequent runtimes with the same parameters, probably due to differences of random initialization. We did not run Convolutional Sinkhorn on this dataset, as the projected runtime exceeded 24 hours.

We calculated a ground truth EMD between 100 of the projected MNIST digits using the exact EMD implementation of Python Optimal Transport. More accurate comparisons might be obtained using more points, but computation of this 100 by 100 distance matrix took over 24 hours, necessitating severe subsampling, and once again highlighting the need for fast and accurate approximations of EMD.

For each algorithm, we obtained an accuracy score by running a kNN classifier on the embeddings using an even train/test split considering only the nearest neighbors. We then computed the Spearman $\rho$ and P@10 score between each method and the exact EMD ground truth.

\subsection{Single cell COVID-19 Dataset}

One hundred sixty eight patients with moderate to severe COVID-19 \citeSupp{Marshall2020} were admitted to YNHH and recruited to the Yale Implementing Medical and Public Health Action Against Coronavirus CT (IMPACT) study. From each patient, blood samples were collected to characterize patient cellular responses across timepoints to capture the spectrum of disease. In total, the composition of peripheral blood mononuclear cell (PBMC) was measured by a myeloid-centric flow cytometry on 210 samples. Finally, clinical data was extracted from the electronic health record corresponding to each biosample timepoint to allow for clinical correlation of findings as shown previously. In the main analysis, poor or adverse outcomes were defined by patient death, while good outcomes were defined by patient survived. In order to analyze the 22 million cells generated in this study, we performed k-means clustering, setting k to 27,000, and took the resultant cluster centroids as a summarization of the cellular state space as done previously in  \cite{kuchroo_multiscale_2020}.

To test the quality of the organization of patients using EMD methods, we test the Laplacian smoothness on the 10-nearest neighbors graph created for each method. The Laplacian smoothness of a function on the nodes is denoted $f^T L f$ where $L = D - A$ is the (combinatorial) Laplacian of the adjacency matrix $A$. This is equivalent to computing the following sum:
\begin{equation*}
    \sum_{(i, j) \in E} (f_i - f_j)^2
\end{equation*}
where $E$ is the edge set of the kNN graph. This measures the sum of squared differences of $f$ along the edges of the kNN graph, which measures the smoothness of $f$ over the graph.

Various cell types have been shown to be associated with mortality from COVID-19 infection, including CD16$^{+}$ Neutrophils, T cells and non-classical monocytes. Previously, T cells-to-CD16$^{+}$ Neutrophils ratios have been reported to be predict of outcome, with low ratios predicting mortality and high ratios predicting survival \citeSupp{Li2020}. Even outside of ratios, the absolute counts to T cells \citeSupp{Chen2020} and CD16$^{+}$ Neutrophils \citeSupp{Meizlish2020.09.01.20183897} have been shown to be associated with mortality. Finally, non-classical monocytes have also been shown to be enriched in patients with more severe outcomes \citeSupp{Pence2020}, further supporting our findings.

\bibliographystyleSupp{icml2021}
\bibliographySupp{supp}

\end{document}